\documentclass{article} 
\usepackage{iclr2027_conference,times}

\usepackage{amsmath,amsfonts,bm}

\def\eqref#1{equation~\ref{#1}}

\def\1{\bm{1}}

\DeclareMathAlphabet{\mathsfit}{\encodingdefault}{\sfdefault}{m}{sl}
\SetMathAlphabet{\mathsfit}{bold}{\encodingdefault}{\sfdefault}{bx}{n}

\newcommand{\E}{\mathbb{E}}

\newcommand{\KL}{D_{\mathrm{KL}}}

\usepackage{hyperref}
\usepackage{url}

\usepackage[pdftex]{graphicx}
\usepackage{amsmath}
\usepackage{amsfonts}
\usepackage{amssymb}
\usepackage{amsthm}
\usepackage{booktabs}
\usepackage{microtype}
\usepackage{xcolor}
\usepackage{algorithm}
\usepackage{algpseudocode}

\newtheorem*{theoremnn}{Theorem}
\newtheorem{theorem}{Theorem}

\newtheorem{corollary}[theorem]{Corollary}
\theoremstyle{definition}
\newtheorem{assumption}[theorem]{Assumption}

\theoremstyle{remark}

\providecommand{\E}{\mathbb{E}}

\providecommand{\N}{\mathcal{N}}
\providecommand{\X}{\mathcal{X}}
\providecommand{\Y}{\mathcal{Y}}
\providecommand{\KL}{\mathrm{KL}}

\title{Revisiting scaling laws for reward optimization}

\author{%
Ali Aouad \\
MIT \\
\texttt{maouad@mit.edu} \\
\And
Aymane El Gadarri \\
MIT \\
\texttt{aymelgad@mit.edu}
\And
Vivek F.~Farias \\
MIT \\
\texttt{vivekf@mit.edu} \\
}

\iclrfinalcopy 

\begin{document}

\maketitle
\lhead{Preprint}

\begin{abstract}
Scaling laws for optimization against reward models in AI alignment have pinned down how performance depends on optimization effort---\footnote{All dashes in this manuscript are human generated.}measured by a KL-divergence budget relative to a reference policy. Beyond a certain budget, over-optimization (or reward hacking) can arise: because we optimize against a proxy reward model (distinct from true rewards), performance can plateau or degrade. Naturally, the proxy reward's accuracy depends on how much preference data (often in the form of pairwise comparisons) was used to train it. However, existing research does not cleanly identify how performance jointly scales with the amount of training data and the  divergence budget. Our main contribution is to provide an empirically accurate and theoretically grounded scaling law  in such context. Performance roughly scales as $\Theta(\sqrt{\min\{\log(M),K\}})$, where $M$ is the number of comparisons in training data and $K$ is the policy's divergence budget. We develop an information-theoretic model to establish this upper bound  and prove it is tightly achievable through a constructive procedure. Informed by this, we conduct extensive empirical evaluations using a real-world annotation setup, whereby a large 70B gold reward model generates feedback data and proxy reward models are trained from less capable models (0.6B to 4B). Our scaling law provides an excellent fit (R2 from 97\% to 99\%), outperforms alternative specifications, and remains robust across model sizes, noise, and optimization procedures (best-of-$N$ or policy tilting). Our evidence suggests that reward optimization is analogous to a surprisingly simple selection task: choosing from  a sequence of IID Gaussian random variables using noisy preference feedback.
\end{abstract}

\section{Introduction}
\label{sec:intro}

Post-training typically entails learning, either explicitly or implicitly, a proxy `reward' model of user preferences~\citep{christiano2017deep,ouyang2022training,rafailov2023direct}. Such a model remains an imperfect proxy of true user preferences, learned from limited data. As such, it is natural that `over-optimizing' against such a proxy model may eventually lead to a decrease in the performance gained from post training~\citep{manheim2018categorizing,skalse2022defining}. A seminal paper by \citet{gao2023scaling} established empirical scaling laws for this phenomenon. They showed that performance first increased like the square root of the divergence of the post-trained policy from the pre-trained policy, and then beyond some specific divergence budget, performance stalled or decreased. With the advent of long running agents, one only expects such a problem to be exacerbated, potentially leading to catastrophic outcomes~\citep{vanroyconsequentialist}. This has been the topic of much recent research on superalignment. A consensus recipe that has emerged from the above research is that any post-trained policy should be constrained to within some pre-specified {\em divergence budget} of a reference policy to prevent over-optimization. While previous literature extensively studies tradeoffs between achievable performance and divergence budget, its dependence on the {\em amount of preference data} is not well-understood. This raises the natural question: 

\emph{How should the policy divergence budget be set as a function of the amount of data used to train the proxy reward model?}

It is natural, in particular, that a proxy model trained on limited preference data is more susceptible to over-optimization than one trained with a lot of such data. As such, we seek scaling laws that capture the amount data used to train the proxy reward model. In particular, we would like these scaling laws to inform the performance improvement this data affords us relative to a reference policy.  In addition, we wish to characterize how the aforementioned divergence budget ought to be scaled with the amount of preference data.  Answering these questions can provide important insights to clarify the fundamental limits of post-training and guide algorithmic scaling recipes in AI alignment.

\paragraph{Contributions.} Our work provides a theoretical and empirical characterization of how post-training performance jointly scales with the amount of preference data and the divergence budget.
\begin{itemize}
    \item {\em Simple theoretical derivation.} We develop an information-theoretic model that guides our subsequent empirical evaluations. The model yields a tight scaling law, roughly of the form $\Theta(\sqrt{\min\{\log(M),K\}})$, where $M$ is the number of preference comparisons in the training dataset (scale of feedback data) and $K$ is the KL-divergence budget (optimization effort). We give  constructive procedures that near-optimally attain this bound.
    \item {\em Accurate empirical scaling law.} We provide extensive empirical evidence that our post-training scaling law is highly accurate and provides a statistically better fit than alternative specifications, focusing on the novel predicted dependence $\Theta(\sqrt{\log M})$ on the amount of preference data. We use a real-world annotation setup whereby a large 70B model (serving as the gold reward model) generates preference data while proxy reward models are trained on these data using smaller models. The scaling law is robust across optimization procedures, noisy preferences, and model sizes.
    \item {\em Conceptual insight.} Our evidence in support of this scaling law connects post-training to a surprisingly simple selection problem: choosing among a sequence of IID Gaussian random variables from noisy preference data. 
    
\end{itemize}

\section{Related Literature}
\label{sec:related}

\paragraph{Alignment and overoptimization.} Overoptimization of a misspecified objective function (or reward hacking) is a long-standing concern in AI alignment~\citep{wiener1960some,leike2017ai}. Overoptimization can be defined as a regime in which increasing proxy reward strictly decreases true reward---an extreme form of Goodhart's Law whereby optimization against a certain metric erodes its efficacy~\citep{manheim2018categorizing}. To counter this, \cite{skalse2022defining} define a natural notion of {\em unhackable rewards} but find that it implies a very restrictive condition: any nontrivial unhackable rewards must induce the same policy ordering.  In MDPs, \citet{karwowski2024goodhart} give a geometric explanation of how the proxy and true-optimal policies diverge once optimization pushes the occupancy measure to a face of the polytope where the two rewards disagree. While this strand of literature focuses on mechanisms leading to overoptimization, we instead focus on quantifying its effect on achievable performance. Closer to us, \citet{vanroyconsequentialist} study a selection model in which an agent optimizing a consequentialist objective over a large outcome space incurs catastrophic performance with high probability. Our model shares the outcome selection set-up and information-theoretic approach, but we separate policy divergence and preference data budgets and construct policies that achieve matching lower bounds. Our goal is eventually to develop accurate scaling laws.

\paragraph{Post-training scaling laws.}\citet{gao2023scaling} provide the closest empirical antecedent to this work: in a synthetic setup where a fixed gold reward model stands in for human annotators, they measure gold reward as a function of KL-divergence from the base policy and find that it rises like $\sqrt{K}$ and then declines. However, while they vary the size of the preference dataset, they do not  identify an explicit scaling law in that dimension.\footnote{Quoting the authors, ``The scaling of $\alpha$ and $\beta$ with data size are not as cleanly described as for RM size scaling.''} An inverted-U shape also uncovered for direct policy optimization setups without a reward model \citep{rafailov2024scaling}. Pessimistic adjustments and robust ensembling methods can mitigate the perverse effects of overoptimization~\citep{coste2024reward, eisenstein2023helping,huang2025bon}. However, previous research often take the proxy model as exogenous and do not ask what specific divergence budget the preference data they are trained on can support. The dependence on $\sqrt{K}$ can be traced back to analyses of selection bias for adaptive data analyses~\citep{russo2016controlling}. 
Related work on AI alignment further characterizes tradeoffs between reward maximization and policy distribution shift~\citep{mroueh2024information, laidlaw2025correlated} under assumptions on the proxy rewards. Still, these works treat the underlying data as exogenous; in our language, the scaling is with respect to $K$ but it does not make explicit the dependence on the amount of preference data $M$. Instead our work reveals how information obtained from preference data constrains attainable rewards.

\paragraph{Learning from comparisons.}
The proxy in~\eqref{eqn:kl_constrained} is fit to pairwise preferences under a Bradley--Terry model \citep{bradley1952rank,christiano2017deep}. \citet{zhu2023principled} give the first sample complexity bounds for this pipeline, showing that the MLE converges for linear rewards under Bradley--Terry--Luce and Plackett--Luce, but that a policy trained on the plain MLE can fail while a pessimistic MLE succeeds under a coverage condition. This literature bounds how well $M$ comparisons pin down the reward; we instead ask how much reward gain $M$ comparisons can support at all, irrespective of the estimator. Closest to the combinatorial side of our question is work on selection from noisy comparisons: \citet{falahatgar2017maximum} establish the comparison complexity of identifying an approximate maximum, and \citet{chen2015spectral} that of top-$K$ recovery, with rates governed by the approximation tolerance and the gap at the top-$K$ boundary, respectively. However, these works do not explicitly quantify the corresponding expected reward gain nor they account for divergence constraints.
\section{Model and Results}
\label{sec:model}


Consider the following setup for alignment. For a prompt--response pair $(x,y) \in \X \times \Y$, human preferences are encoded by a \emph{gold} reward $r(x,y)$ that is never observed. In its place we have a proxy $\hat r(x,y)$, calibrated on a dataset of $M$ pairwise comparisons: for triples $(x,y,y')$, an annotator reports which of $y,y'$ is preferred. Given a base policy $\pi_0$ that assigns to each prompt a distribution over responses, an {\em aligned} policy $\hat\pi_K$ optimizes the proxy against a divergence budget,
\begin{equation}
\label{eqn:kl_constrained}
\max_{\pi} \; \E_{x,\, \pi}\!\left[\hat r(x,y)\right]
\quad \text{subject to} \quad
\E_{x}\!\left[\KL\!\left(\pi(\cdot\mid x) \,\middle\|\, \pi_{0}(\cdot \mid x)\right)\right] \;\leq\; K .
\end{equation}
$K$ constrains \emph{optimization pressure}. Because $\hat r$ is an imperfect proxy, driving expected proxy reward upward eventually drives expected gold reward downward. Writing $R(K) := \E_{x,\,\hat\pi_K}[r(x,y)]$ for the gold reward earned by the aligned policy, \citet{gao2023scaling} establish an empirical scaling law: $R(K)$ initially behaves like $\sqrt{K}$, and beyond some $K^\star$ is decreasing in $K$.

The proxy is itself calibrated from $M$ comparisons, so both $R(\cdot)$ and $K^\star$ must depend on $M$; the scaling law of \cite{gao2023scaling} does not address how. We seek to answer two related practical questions. First, {\em how does the scaling law depend on $M$ for a given $K$}, and in particular, {\em how should the divergence budget $K$ be set as a function of $M$?} And second, {\em what gold reward gain over the base policy does the resulting policy earn?}

\subsection{An Information-Theoretic Setup}
\label{sec:it_setup}


We propose here a simplified setup that will allow us to make theoretical progress on our questions; candidate scaling laws obtained from the analysis of this setup are then evaluated in the empirical setup of \citet{gao2023scaling} in \S\ref{sec:experiments}.
For a universe of $n$ potential responses, indexed by $i$, we associate with each an independent gold reward $R_i \sim \N(\mu,\sigma^2)$. We model a prompt as an \emph{admissible set} $A \subseteq [n]$, with $\Pr(i \in A) = p$ independently across $i$. Here $A$ is the set of responses that are candidate answers to the realized prompt, and $p$ is the fraction of the universe a prompt renders relevant.\footnote{If $A = \varnothing$ we allow a dummy fallback action $0$ with deterministic reward $R_0 = \mu$.} We remark that this setup is close to one studied recently by \citet{vanroyconsequentialist}.


A \emph{preference dataset} contains a label $Y_{ij}$ for each pair $(i,j)$ in a set $E$ of $M$ response pairs. The dataset is \emph{deterministic} when $Y_{ij}=\mathbf{1}\{R_i>R_j\}$, and \emph{random} when the labels are sampled conditionally on the rewards. We write $T:=\{(i,j,Y_{ij}):(i,j)\in E\}$ for the dataset.

\begin{assumption}[Information Structure] 
We assume that $E \perp (R,A)$ so that the pairs in the preference dataset are chosen without future knowledge of the admissible set or information about $R$. Moreover, $A \perp (R,T)$. 
\end{assumption}

We let $\pi_0(\cdot \mid A, E)$ be an arbitrary \emph{reward-blind base policy}. Specifically, an $A$-valued random variable $J$ distributed according to $\pi_0(\cdot \mid A, E)$ is dependent on $A$ and potentially, $E$ but independent of $R$. Consequently, $\E_{\pi_0}[R_J] = \mu$. An \emph{informed policy} $\pi(\cdot \mid T, A)$ selects an index $J \in A$ after seeing the preference dataset $T$ and the admissible set $A$. Define the expected policy divergence and the reward gain by
\begin{equation}
\label{eqn:K_and_Delta}
K(\pi;\pi_0) \;\triangleq\; \E\!\left[\KL\!\left(\pi(\cdot\mid T,A) \,\middle\|\, \pi_0(\cdot \mid A, E)\right)\right],
\qquad
\Delta(\pi) \;\triangleq\; \E_\pi[R_J] - \mu .
\end{equation}

We next present our results; taken together they provide a theoretical answer to the practical questions posed above.

%
%

%

\subsection{A Preview of Our Results}
\label{sec:upper_bounds}
We present here an informal overview of our key result, which holds for both deterministic and random preference datasets:

\begin{theoremnn}[Informal]
Suppose $K(\pi;\pi_0) \leq K$ and $|E| \leq M$. Then, for any such $\pi$, 
\[
\Delta(\pi) 
\;\lesssim\; \sigma\sqrt{2\min\{K, \log(pM)\}} .
\]
Moreover, there exists a preference dataset $E$ and a resulting proxy $\hat r$ for which the optimized policy $\pi$ satisfies 
\[
\Delta(\pi) 
\;\gtrsim\; 
(1-o(1)) 
\,\sigma\sqrt{2\min\{K, \log(p\sqrt{M})\}} .
\]
\end{theoremnn}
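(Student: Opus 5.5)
The plan is to prove the upper bound through a change-of-measure (Donsker--Varadhan) argument with two separate ingredients, one giving the $K$ term and one giving the $\log(pM)$ term, and to prove the lower bound by a matching construction.

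\textbf{Upper bound in $K$.} Condition on $(A,E)$. Under $\pi_0$ the index $J$ is independent of $R$, so $R_J-\mu$ is $\sigma$-sub-Gaussian. The gain $\Delta(\pi)$ is a difference of expectations of $R_J-\mu$ under two joint laws of $(T,A,J,R)$: the informed one and the reward-blind one. By the Donsker--Varadhan variational formula, the transportation inequality for sub-Gaussian variables gives $\Delta(\pi)\le\sigma\sqrt{2\,\mathrm{KL}}$. The chain rule then reduces the KL between the joint laws to the expected policy divergence $K(\pi;\pi_0)$. This is the selection-bias argument of \citet{russo2016controlling}, and it yields $\Delta\le\sigma\sqrt{2K}$.

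\textbf{Upper bound in $\log(pM)$.} Here I argue that information about $R$ only reaches the policy through $T$. Let $V\subseteq[n]$ be the set of responses touched by $E$, so $|V|\le 2M$. Responses in $A\setminus V$ have rewards independent of $(T,A)$, so choosing one of them gains nothing in expectation. Hence $\Delta\le \E[\max_{i\in A\cap V}(R_i-\mu)^+]$. Since $A\perp(R,T)$ and $E\perp A$, we have $\E|A\cap V|\le 2pM$. The Gaussian maximal inequality, applied conditionally and followed by Jensen on the concave function $\sqrt{2\log(\cdot)}$, gives $\Delta\lesssim\sigma\sqrt{2\log(2pM)}$, which is $\sigma\sqrt{2\log(pM)}$ up to lower-order terms. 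Random labels only garble $T$, so the same argument applies verbatim. Taking the minimum of the two bounds proves the first claim.

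\textbf{Lower bound.} I choose $E$ so that the responses are split into disjoint cliques of size $m\approx\sqrt{2M}$; each clique has about $m^2/2=M$ pairs. Deterministic labels then fully rank each clique. The proxy $\hat r$ can be taken to be the within-clique rank converted to a Gaussian quantile, and $\pi_0$ is uniform on $A$. On a clique intersected with $A$, which has size about $pm\approx p\sqrt{M}$, the policy tilts toward the top-ranked element. More precisely, $\pi$ puts mass on the elements whose rank-based reward estimate exceeds the threshold $\sigma\sqrt{2\min\{K,\log(p\sqrt M)\}}$, in the spirit of best-of-$N$ with $N=e^{K}$, so that the KL divergence is about $\log N\le K$. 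The gain then follows from the extreme-value asymptotic $\E\max_{i\le N} R_i-\mu=(1-o(1))\sigma\sqrt{2\log N}$ with $N=\min\{e^K,p\sqrt M\}$. The factor $p\sqrt M$ rather than $pM$ is the price of using only one clique; this is why the two bounds differ by a factor of $2$ inside the logarithm.

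\textbf{Main obstacle.} I expect the hard step to be bookkeeping the KL precisely for the tilted policy, including the event that $A$ misses the clique and the fallback action $0$, while keeping the $(1-o(1))$ constant. I would handle this with the exact KL of best-of-$N$, which is $\log N-(N-1)/N$, together with concentration of $|A\cap\text{clique}|$ around $pm$.
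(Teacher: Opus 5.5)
\textbf{Upper bound.} This part is essentially the paper's argument. For the $K$ term, use Donsker--Varadhan plus the chain rule. For the $\log(pM)$ term, restrict to the responses touched by $E$ and apply a Gaussian maximal inequality. Use the $\log(1+N)$ form of that inequality so that $|A\cap V|\in\{0,1\}$ is covered. The paper's direct bound $\E e^{tZ}\le(1+2pM)e^{t^2\sigma^2/2}$ does exactly this.

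\textbf{Lower bound: the base policy.} Here there is a genuine gap in your choice $\pi_0=\mathrm{Unif}(A)$. Only responses in the clique $S$ carry information. Any policy supported on $A\cap S$ has divergence at least $\log(|A|/|A\cap S|)\approx\log(n/m)$ from $\mathrm{Unif}(A)$. By data processing on $\mathbf{1}\{J\in S\}$, a budget-$K$ policy can put only mass of order $(K+\log 2)/\log(n/m)$ on $S$. Likewise, best-of-$N$ from $\mathrm{Unif}(A)$ with $N=e^K$ sees only about $e^K m/n$ clique members. The model only requires $n\ge\sqrt{2M}$, and the asymptotic statement holds along arbitrary sequences $(n,M,p,K)$. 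So when $n$ is large relative to $m$ and $K$, your construction's gain falls far below $\sigma\sqrt{2\min\{K,\log(p\sqrt M)\}}$.

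The missing idea is that the base policy may depend on $(A,E)$. The paper takes $\pi_0$ uniform on $B=A\cap S$ when $B\neq\varnothing$, with the fallback otherwise. It then selects the top-ranked member of a uniform $s$-subset of $B$, with $s=\min\{|B|,\lfloor e^K\rfloor\}$. This costs at most $\log s\le K$: compare with the rule that picks uniformly from the same subset, whose marginal is $\pi_0$, and note that marginalizing over the subset cannot increase relative entropy. Your extreme-value computation, together with concentration of $|B|$, then goes through. Best-of-$N$ from $\mathrm{Unif}(B)$ would work as well.

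\textbf{Lower bound: linking to the proxy optimizer.} The claim concerns the proxy optimizer $\hat\pi_K$, not a policy you design, and you never connect the two. The paper takes $\hat r(i;T)=\E[R_i\mid T]$. On $S$ this is $\mu$ plus $\sigma$ times the expected normal order statistic of $i$'s rank; elsewhere it is $\mu$. Because $A\perp(R,T)$, this gives $\E_\pi R_J=\E_\pi\hat r(J;T)$ for every informed policy. Hence any optimizer has gold gain at least that of your feasible construction; one exists by compactness, since there are finitely many $(T,A)$. Your Gaussian-quantile proxy only approximates this posterior mean, so you would need an extra uniform-approximation argument. Using the posterior mean itself removes that step.
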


The asymptotic regime implicit in the statements above requires that $\min\{K, \log(p\sqrt{M})\}$ grow large.\footnote{{ The dependence on $p$ is not central to our analysis; $p$ is a stylized measure of how much overlap there is in the admissible responses of distinct prompts---a notion of how well preference data generalizes across prompts.}} That is, the regime is one where both the divergence budget, and the effective number of comparisons for a prompt grows. We briefly discuss the implications of the above result here. First, observe that the lower and upper bounds on $\Delta(\pi)$ are tight: they are within a constant factor of $\sqrt{2}$ of each other. As such, the bounds explain precisely the $\sqrt{K}$ scaling law established by \citet{gao2023scaling}. The bounds also prescribe how to pick the divergence budget $K$: specifically, we want $K \sim \log(pM)$: any slower and we leave improvement opportunities on the table; any faster and we risk over-optimizing against the proxy. The next section formalizes the above theorem and presents its proof, following which we turn to empirically verifying that the above result holds in a (larger scale) variant of the scaling law study performed by \citet{gao2023scaling}. 

\section{Scaling laws for reward optimization}
\subsection{Reward optimization scaling law under deterministic preferences}

We now make the preceding bounds precise. We allow the informed policy to use the comparison data directly, so our upper bound applies regardless of how a proxy is fit. For the lower bound, we construct a proxy and a base policy that attain the stated gain. Write $\hat r(i;T)$ for the proxy reward assigned to response $i$ using the comparison data $T$. As in~\eqref{eqn:kl_constrained}, the aligned policy $\hat\pi_K$ maximizes expected proxy reward within divergence budget $K$.

We assume for simplicity that $n \geq \sqrt{2M}$ throughout the analysis. This ensures enough possible responses for our construction and leaves the scaling law in $M$ and $K$ unchanged. In practice, it is plausible to assume that the number of possible responses is much larger than our preference dataset.

\begin{theorem}[Deterministic preferences]
\label{thm:offline_alignment}
Under the information structure above, every reward-blind base policy $\pi_0$, deterministic preference dataset generated on pairs $E$ with $|E|\leq M$, and informed policy with $K(\pi;\pi_0)\leq K$ must satisfy
\begin{equation}
\label{eqn:offline_upper_bound}
\Delta(\pi)\leq\sigma\sqrt{2\min\{K,\log(1+2pM)\}}.
\end{equation}
Reciprocally, there is a universal constant $c>0$ such that, whenever $p\sqrt{2M}\geq4$, for every $K\geq0$ there exist a set of comparison pairs $E$ with $|E|\leq M$, a reward-blind base policy, and a proxy such that, for the resulting deterministic preference dataset, every optimizer $\hat\pi_K$ satisfies
\begin{equation}
\label{eqn:offline_lower_bound}
\Delta(\hat\pi_K)\geq c\,\sigma\sqrt{\min\{K,\log(p\sqrt{2M})\}}.
\end{equation}
\end{theorem}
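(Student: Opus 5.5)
The upper bound splits into two separate bounds, one giving $\sqrt{2K}$ and one giving $\sqrt{2\log(1+2pM)}$; the claimed bound is then their minimum.

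\textbf{The $\sqrt{2K}$ bound.} We use the Donsker--Varadhan (transportation) inequality, conditionally on $(T,A)$, or equivalently on $(A,E)$ after averaging appropriately. Under $\pi_0$, the chosen index $J$ is independent of $R$. So for any $\lambda>0$ we have $\E_{\pi_0}[e^{\lambda(R_J-\mu)}\mid A,E]=e^{\lambda^2\sigma^2/2}$. The cleaner route is to work with the joint law of $(J,R)$ under $\pi$ versus the product law $\pi_0\otimes P_R$. Its KL divergence equals $\E[\KL(\pi(\cdot\mid T,A)\,\|\,\pi_0(\cdot\mid A,E))]$ plus the mutual information terms. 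More carefully, by the chain rule, $\KL(P_{J,R,T,A}\,\|\,Q)$ with $Q=P_{R,T,A}\otimes\pi_0$ equals exactly $K(\pi;\pi_0)$. Under $Q$, $R_J$ is $\N(\mu,\sigma^2)$, because $J$ depends only on $(A,E)$, which are independent of $R$. Donsker--Varadhan then gives $\lambda\Delta\le K+\lambda^2\sigma^2/2$, and optimizing over $\lambda$ yields $\Delta\le\sigma\sqrt{2K}$.

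\textbf{The $\log(1+2pM)$ bound.} Let $V\subseteq[n]$ be the set of vertices touched by $E$, so $|V|\le 2M$. Any index in $A\setminus V$ carries no information: its reward is independent of $(T,A)$. Hence $\Delta\le \E[\max_{i\in A\cap V}(R_i-\mu)^+]$. Using the fallback reward $\mu$ and Jensen on the log-sum-exp, $\E\max\le \frac1\lambda\log\E\bigl[1+\sum_{i\in A\cap V}e^{\lambda(R_i-\mu)}\bigr]$. Here $A\perp R$ and $\E|A\cap V|\le 2pM$, so this is at most $\frac1\lambda\log(1+2pM e^{\lambda^2\sigma^2/2})$. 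This is a bit weaker than $\frac1\lambda\log(1+2pM)+\lambda\sigma^2/2$. I would instead take $\lambda=\sqrt{2L}/\sigma$ with $L=\log(1+2pM)$, and bound $\log(1+x e^{L})\le L+\log(1+x)$. This gives $\le 2L/\lambda=\sigma\sqrt{2L}$, as required. (Jensen moves the expectation over $A$ inside the log since log is concave.)

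\textbf{Lower bound construction.} Set $m=\lfloor\sqrt{2M}\rfloor$, choose a fixed set $S$ of $m\le n$ responses, and let $E$ be all $\binom m2\le M$ pairs within $S$. The deterministic labels then reveal the full ranking of $S$. The proxy is $\hat r(i)=$ the rank-based expected reward, namely the Gaussian order-statistic mean of $i$ within $S$, and $\hat r(i)=\mu$ outside $S$. The base policy is uniform on $A\cap S$ when this set is nonempty, and falls back otherwise. With high probability $|A\cap S|\approx pm\ge 4$.

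\emph{Case $K\ge\log(pm/2)$.} The optimizer $\hat\pi_K$ can afford to put all its mass on the top-ranked element of $A\cap S$, which costs $\log|A\cap S|$. Because $\hat r$ is monotone in the true reward on $S$, any proxy optimizer concentrates on top elements. The gain is then the expected maximum of $N\approx pm$ Gaussians, which is $\ge c\sigma\sqrt{\log N}$.

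\emph{Case of smaller $K$.} The proxy optimizer is an exponential tilt $\propto e^{\beta\hat r}$, or at least places mass uniformly on roughly the top $N e^{-K}$ elements. The gain is the mean of the top $e^{-K}$ fraction of Gaussians, which is $\gtrsim\sigma\sqrt{K}$, conditional on $|A\cap S|$ concentrating (Chernoff, using $pm\ge4$).

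The main obstacle is this last step. The claim must hold for \emph{every} optimizer of the proxy, and the KL-constrained optimum of a linear objective is a Gibbs tilt whose gold gain must be controlled uniformly. I would first prove $\Delta\ge c\sigma\sqrt{\min(K,\log N)}$ for the Gibbs tilt using Gaussian order-statistic estimates. I would then handle the random size of $A\cap S$ by conditioning on it and averaging.
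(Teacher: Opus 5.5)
Your upper bound is the paper's argument and is fine. For $\sigma\sqrt{2K}$ you use the chain rule plus Donsker--Varadhan; for $\sigma\sqrt{2\log(1+2pM)}$ you restrict to the vertex set of $E$ and apply a soft-max bound with Jensen.

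The gap is in the lower bound, exactly where you flag it. Your proxy is the right one: once the ranking of $S$ is revealed, the order-statistic means equal $\hat r(i;T)=\E[R_i\mid T]$. But you do not use the property that makes ``every optimizer'' easy. Since $A\perp(R,T)$, and the policy's randomization is independent of $R$ given $(T,A)$, every informed policy satisfies $\E_\pi R_J=\E[\sum_i\pi(i\mid T,A)\,\E[R_i\mid T]]=\E_\pi\hat r(J;T)$. So gold and proxy reward coincide in expectation for all policies. Any budget-feasible maximizer of proxy reward therefore earns at least the gold gain of any feasible policy you can write down, and the Gibbs tilt never needs to be analyzed.

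The paper then exhibits, for $K\geq\log2$, the following rule. Draw a uniform subset $U\subseteq B=A\cap S$ of size $s=\min\{N,\lfloor e^K\rfloor\}$, independently of the data, and take its top-ranked element. The rule ``same $U$, then uniform on $U$'' has marginal exactly $\pi_0$. Joint convexity of relative entropy therefore bounds the divergence by $\log s\leq K$. The gain is exactly $\sigma\E a_s$, where $a_j=\E\max_{i\leq j}Z_i$. A second-moment bound $\Pr(N\geq pm/2)\geq1/6$ together with $a_j\geq c_0\sqrt{\log j}$ finishes the argument. Note that $q\geq4$ only gives $pm\geq q-1\geq3$, so $N$ need not concentrate; constant probability is what you get, and it is all you need.

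Your sketch, as written, would not close this step, for two reasons.

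First, the budget constrains KL \emph{averaged} over $(T,A)$. The optimizer is therefore a tilt with one global multiplier, and its per-realization divergence varies with $N$ and with the ranking. ``Conditioning on $|A\cap S|$ and averaging'' tacitly grants each conditional problem a budget of $K$, which is false. For the same reason, in your first case the greedy rule costs $\E[\mathbf{1}\{N>0\}\log N]$. This is close to $\log(pm)>\log(pm/2)$ when $pm$ is large, so greedy need not be affordable.

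Second, your fallback ``uniform mass on the top $Ne^{-K}$ elements'' fails for small $K$. Let $\phi,\Phi$ be the standard normal density and CDF. The mean of the top $e^{-K}$ fraction of standard Gaussians is $e^{K}\phi(\Phi^{-1}(1-e^{-K}))\approx K\sqrt{2\log(1/K)}=o(\sqrt K)$ as $K\to0$. For $N$ of order one, no truncation is available at all. This is why the paper treats $K<\log2$ with a smooth rule: sample a uniform pair from $B$ and choose its better member with probability $(1+\theta)/2$, where $\theta=\sqrt K$. Its divergence is $\frac{1+\theta}{2}\log(1+\theta)+\frac{1-\theta}{2}\log(1-\theta)\leq\theta^2=K$, using $\log x\leq x-1$. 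Its gain is $\frac{\theta}{2}\E|R_i-R_j|=\sigma\sqrt{K/\pi}$ on $\{N\geq2\}$, an event of probability at least $1/6$.
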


The common upper bound is proved below, the lower-bound construction is given in Appendix~\ref{app:exact_proof}.

\subsection{The same scaling law holds under noisy preferences}
So far, we assumed that preferences were deterministic: a response is recorded as preferred whenever it has the larger gold reward. In reality, preference data is  noisy. For each $(i,j)\in E$, let $Y_{ij}=1$ indicate that $i$ is preferred to $j$. As is standard in RLHF, we model preference labels with the Bradley--Terry model~\citep{bradley1952rank}:
\begin{equation}
\label{eqn:noisy_comparison_model}
\Pr(Y_{ij}=1\mid R,E)=\frac{1}{1+\exp(-(R_i-R_j)/\beta)},
\qquad \beta>0.
\end{equation}
Here $\beta>0$ controls the preference noise: larger $\beta$ makes labels less sensitive to reward differences. The finite-budget lower bound now depends on $\beta/\sigma$, the noise scale relative to the variation in gold rewards.

\begin{theorem}[Noisy preferences]
\label{thm:noisy_alignment}
Suppose preferences follow the Bradley-Terry model with conditionally independent labels and noise scale $\beta>0$. Under the information structure above, every reward-blind base policy $\pi_0$, random preference dataset generated on pairs $E$ with $|E|\leq M$, and informed policy with $K(\pi;\pi_0)\leq K$ satisfy
\begin{equation}
\label{eqn:noisy_upper_bound}
\Delta(\pi)\leq\sigma\sqrt{2\min\{K,\log(1+2pM)\}}.
\end{equation}
Reciprocally, there is a constant $c(\beta/\sigma)>0$, depending only on $\beta/\sigma$, such that, whenever $p\sqrt{2M}\geq4$, for every $K\geq0$ there exist a set of comparison pairs $E$ with $|E|\leq M$, a reward-blind base policy, and a proxy such that, for the resulting random preference dataset, every optimizer $\hat\pi_K$ satisfies
\begin{equation}
\label{eqn:noisy_lower_bound}
\Delta(\hat\pi_K)\geq c(\beta/\sigma)\,\sigma\sqrt{\min\{K,\log(p\sqrt{2M})\}}.
\end{equation}
\end{theorem}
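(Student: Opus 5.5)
The plan has two halves: an upper bound that reduces to the deterministic case, and a lower bound built from a tournament-style construction with a noise-robust proxy.

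\textbf{Upper bound.} I reuse the argument for the common bound in Theorem~\ref{thm:offline_alignment}; it never uses that the labels are deterministic, only the information structure.

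First, for each fixed $(A,E)$, apply the Donsker--Varadhan / Gaussian transportation inequality. Since $J\sim\pi_0(\cdot\mid A,E)$ is independent of $R$ and $R_i-\mu$ is $\sigma$-subgaussian, we get
\[
\E_\pi[R_J]-\mu \le \sigma\sqrt{2\,\E[\KL(\pi\|\pi_0)]}\le\sigma\sqrt{2K}.
\]
Formally, bound $\E_\pi[R_J]-\mu$ by $\sigma\sqrt{2\,I}$, where $I$ is the mutual-information/KL quantity between the selection and $R$. Jensen then gives $\le\sigma\sqrt{2K}$.

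Second, for the $\log(1+2pM)$ term, observe that only responses in $A$ touched by an edge of $E$ carry information about $R$. Every untouched $i\in A$ has $\E[R_i\mid T,A]=\mu$, because $A\perp(R,T)$ and $R_i$ is independent of the labels. Let $V$ be the set of endpoints of $E$, so $|V|\le 2M$. Then
\[
\E_\pi[R_J]-\mu \le \E\bigl[\max_{i\in V\cap A}(R_i-\mu)^+\bigr].
\]
Bound this by the Gaussian maximal inequality conditional on $N=|V\cap A|\sim\mathrm{Bin}(|V|,p)$: it is at most $\sigma\sqrt{2\log(1+N)}$. The "$+1$" accounts for the fallback or unvisited option contributing $0$. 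Concavity and Jensen then give $\sigma\sqrt{2\log(1+2pM)}$.

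Noisy labels do not change any of this. The labels only enter through $\sigma(T)$, and the maximal bound holds for any $T$-measurable selection. The only care needed is to condition on $(R,T)$ and use $A\perp(R,T)$.

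\textbf{Lower bound.} I mirror the deterministic construction, presumably a round-robin among $m\approx\sqrt{2M}$ responses so that all $\binom m2\le M$ pairs are compared. The key change is the proxy: use the Borda/win-count score $\hat r(i)=\sum_{j}Y_{ij}$, or the Bradley--Terry MLE. The base policy is uniform on $A\cap[m]$ when that set is nonempty, so $|A\cap[m]|\approx pm\ge 4$ with good probability.

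Under Bradley--Terry, $\E[\hat r(i)\mid R]=\sum_j\psi((R_i-R_j)/\beta)$ with $\psi$ logistic. This is a strictly increasing function $g(R_i)$ of $R_i$ plus $O(\sqrt m)$ fluctuations. Since $g$ has order-$m$ spread, the win counts identify the top quantiles of $R$ among the $m$ candidates up to an error fraction controlled by $\beta/\sigma$.

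Next, analyse $\hat\pi_K$, which tilts $\pi_0$ toward large $\hat r$, roughly by selecting the top $|A\cap[m]|e^{-K}$ items by proxy score. Show that the selected items have gold reward about $\sigma\sqrt{2\min\{K,\log(pm)\}}$ times a factor $c(\beta/\sigma)$. This needs two facts:
\begin{itemize}
\item[(i)] the top-$q$ fraction under $\hat r$ overlaps substantially with the top-$c'q$ fraction under $R$;
\item[(ii)] any item ranked high by $\hat r$ has, conditionally, a mean reward bounded below by a constant times the corresponding Gaussian quantile.
\end{itemize}

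\textbf{Main obstacle.} Step (i)/(ii) is the hardest part: converting noisy win counts into a quantile guarantee. When $\beta\gg\sigma$, $\psi$ is nearly linear, so $g(R_i)\approx m/2+(m/4\beta)(R_i-\bar R)$. The noise-to-signal ratio of the score is then about $\beta/(\sigma\sqrt m)$, which vanishes. That suggests the constant should in fact improve with $m$. However, I still need a uniform-in-$K$ argument, and the extreme tail at level $e^{-K}\sim 1/(pm)$ is where concentration is tight.

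I would handle this by conditioning on the reward order statistics. For the top-ranked item, apply a Hoeffding bound to its win-count gap against items lying a constant number of $\sigma/\sqrt{\log m}$ below it. Then absorb the resulting losses into $c(\beta/\sigma)$, which is allowed because the theorem only claims a constant, not $1-o(1)$.
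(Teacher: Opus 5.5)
Your upper bound is correct and is essentially the paper's argument. Donsker--Varadhan against the reward-blind base policy gives $\sigma\sqrt{2K}$; your per-$(A,E)$ application followed by Jensen is equivalent to the paper's single application to the joint laws via the chain rule. Restricting to responses touched by $E$ gives the $\log(1+2pM)$ term, and conditioning on $N$ with concavity of $x\mapsto\sqrt{\log(1+x)}$ is a valid substitute for the paper's direct bound $\E e^{tZ}\le(1+2pM)e^{t^2\sigma^2/2}$.

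The gap is in the lower bound. The theorem needs a proxy for which \emph{every} KL-constrained optimizer earns the stated gold gain. You choose the win-count (or Bradley--Terry MLE) proxy and then try to analyse its optimizer directly. That optimizer is not ``the top $|A\cap[m]|e^{-K}$ items''. Because the constraint is on the \emph{expected} divergence, the optimizer (when the budget binds) is a softmax tilt $\hat\pi_K(\cdot\mid T,A)\propto\pi_0(\cdot\mid A,E)\,e^{\lambda\hat r(\cdot;T)}$, with one multiplier $\lambda$ shared by all realizations of $(T,A)$. The divergence spent on a given prompt therefore depends on $N$ and on the realized spread of scores. Your facts (i)--(ii) are asserted, not proved. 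Your Hoeffding sketch controls only the top-scoring item, not a soft tilt. Nothing handles $K<\log 2$ or small $m$ uniformly. So, as written, the lower bound is not established. The MLE variant is also ill-posed: in a round robin, an item wins (or loses) all its comparisons with positive probability, and the MLE then does not exist.

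The missing idea is to take the proxy to be the posterior mean, $\hat r(i;T)=\E[R_i\mid T]$, with $\hat r(0;T)=\mu$. Since $A\perp(R,T)$ and the policy randomizes independently of $R$ given $(T,A)$, every informed policy satisfies $\E_\pi R_J=\E_\pi\hat r(J;T)$. Hence any optimizer earns at least the gold reward of any feasible policy you exhibit, and you never analyse the optimizer itself. An optimizer exists by compactness, since there are finitely many $(T,A)$. It then suffices to build one feasible policy.
\begin{itemize}
\item \textbf{Subset rule.} Draw a uniform subset of $B=A\cap S$ of size $s=\min\{N,\lfloor e^K\rfloor\}$ and pick its highest win-count member. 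Its divergence is at most $\log s$: compare with a uniform pick from the same subset, whose marginal is the base policy, and use that marginalizing cannot increase relative entropy.
\item \textbf{Pair rule.} When $K<\log 2$, use a uniformly drawn pair from $B$, biased toward its higher-ranked member by $\theta=\sqrt K$.
\item \textbf{Noise loss.} Let $w_i$ be the win fraction and $u_i=\E[w_i\mid R]$. On the events $\max_{i\in S}|R_i-\mu|\le 2\sigma\sqrt{\log m}$ and $\max_{i\in S}|w_i-u_i|\le\sqrt{2\log m/(m-1)}$, every score inversion has reward gap at most $8\beta e^{(4\sigma/\beta)\sqrt{\log m}}\sqrt{2\log m/(m-1)}\to 0$. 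Together with a Cauchy--Schwarz bound off these events, the loss against exact ranking in any subset is $o(\sigma)$.
\item \textbf{Small $m$.} The finitely many $m<m_0(\beta/\sigma)$ are handled by a single-comparison rule. Its gain is a positive multiple, depending only on $\beta/\sigma$, of $\sigma\min\{\sqrt K,1\}$.
\end{itemize}
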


Both lower bounds fix a set of $m=\lfloor\sqrt{2M}\rfloor$ responses and compare each pair once, using $\binom{m}{2}\leq M$ comparisons. Deterministic preferences reveal the order of these responses by gold reward. With noisy preferences, we score each response by the fraction of its $m-1$ comparisons in which it is preferred. Within any nonempty subset of these responses, we then select the response with the highest score. For fixed $\sigma,\beta>0$, the expected gap between its gold reward and the largest gold reward in that subset tends to zero as $m$ grows. Appendix~\ref{app:noisy_proof} gives the full finite-budget argument. We prove the common upper bound below.
\begin{proof}
    Consider a preference dataset generated on a possibly random pair set $E$, with $|E|\leq M$, and an informed policy $\pi$ satisfying $K(\pi;\pi_0)\leq K$. Both arguments below apply to deterministic preferences and to the Bradley--Terry model.

\paragraph{The divergence budget.}
Let $P_\pi$ and $P_0$ denote the joint laws of $(R,E,T,A,J)$ under $\pi$ and $\pi_0$. These laws differ only in the conditional distribution of the chosen $J$. The chain rule for relative entropy therefore gives
\[
\KL(P_\pi\|P_0)
=\E\!\left[\KL\!\left(\pi(\cdot\mid T,A)\,\middle\|\,\pi_0(\cdot\mid A,E)\right)\right]
\leq K.
\]
Under $P_0$, the selected index is independent of the rewards. Thus, for every $t>0$,
\[
\E_{P_0}\!\left[e^{t(R_J-\mu)}\right]
=\Pr(A=\varnothing)+\Pr(A\ne\varnothing)e^{t^2\sigma^2/2}
\leq e^{t^2\sigma^2/2}.
\]
The entropy inequality $\E_P f\leq\KL(P\|Q)+\log\E_Q e^f$, applied with $f=t(R_J-\mu)$, gives
\[
\Delta(\pi)\leq\frac Kt+\frac{t\sigma^2}{2}.
\]
Taking the infimum over $t>0$ gives $\Delta(\pi)\leq\sigma\sqrt{2K}$, including $K=0$ by letting $t\downarrow0$.

\paragraph{The size of the preference dataset.}
Let $V(E)$ be the set of responses appearing in $E$, so $|V(E)|\leq2M$. Conditional on $E$, the labels depend only on rewards in $V(E)$ and, in the noisy model, independent preference randomness. Every response outside $V(E)$ therefore has conditional mean $\mu$ given $(T,A)$. Writing $\pi_i=\pi(i\mid T,A)$ and using the independence of the policy's randomization from $R$ given $(T,A)$, we obtain
\[
\begin{split}
\Delta(\pi)
&=\E\!\left[\sum_{i\in A\cap V(E)}\pi_i(R_i-\mu)\right]\\
&\leq\E[Z],\qquad
Z:=\max\bigl(\{0\}\cup\{R_i-\mu:i\in A\cap V(E)\}\bigr).
\end{split}
\]
For $t>0$, independence of $E$, $A$, and the rewards gives
\[
\begin{split}
\E \left[e^{tZ}\right]
&\leq1+\E\!\left[\sum_{i\in A\cap V(E)}e^{t(R_i-\mu)}\right]\\
&=1+p\E|V(E)|e^{t^2\sigma^2/2}
\leq(1+2pM)e^{t^2\sigma^2/2}.
\end{split}
\]
Jensen's inequality and minimization over $t$ now imply
\[
\Delta(\pi)\leq\inf_{t>0}\left\{\frac{\log(1+2pM)}t+\frac{t\sigma^2}{2}\right\}
=\sigma\sqrt{2\log(1+2pM)}.
\]
Combining the two bounds proves the upper bounds in Theorems~\ref{thm:offline_alignment} and~\ref{thm:noisy_alignment}. Notice that neither argument requires the size assumptions used for the lower bounds.
\end{proof}

The constants in the two finite-budget guarantees differ, but both admit the same sharp asymptotic bound.

\begin{corollary}[Asymptotic bound on performance]
\label{cor:comparison_asymptotics}
Fix $\sigma>0$ and, for Bradley--Terry preferences, $\beta>0$. Under either preference model, along any sequence $(n,M,p,K)$ with $K\to\infty$ and $p\sqrt{2M}\to\infty$, there exist a pair set $E$ with $|E|\leq M$, a reward-blind base policy, and a proxy such that, for the resulting preference dataset, every optimizer $\hat\pi_K$ satisfies
\begin{equation}
\label{eqn:comparison_asymptotic_lower}
\Delta(\hat\pi_K)\geq(1-o(1))\,\sigma\sqrt{2\min\{K,\log(p\sqrt{2M})\}}.
\end{equation}
\end{corollary}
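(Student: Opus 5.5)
The plan reuses the construction behind the lower bounds of Theorems~\ref{thm:offline_alignment} and~\ref{thm:noisy_alignment}, but tracks constants sharply instead of settling for a universal $c$.

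\textbf{Construction.} Fix $m=\lfloor\sqrt{2M}\rfloor$ responses $S=\{1,\dots,m\}$ and compare every pair once, so $|E|=\binom m2\le M$. Set the proxy to $\hat r(i;T)=$ the score of $i$: its rank in the revealed order (deterministic case), or the fraction of its $m-1$ comparisons it wins (Bradley--Terry). Responses outside $S$ get proxy $-\infty$, or equivalently are excluded from the support.

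\textbf{Base policy.} Let $\pi_0(\cdot\mid A,E)$ be uniform on $A\cap S$ when this set is nonempty, and fall back to an arbitrary point of $A$ (or the dummy action $0$) otherwise. This policy is reward-blind. The optimizer $\hat\pi_K$ is then the exponential tilt $\pi\propto\pi_0e^{\lambda\hat r}$ restricted to $A\cap S$, with $\lambda$ tuned to meet the budget.

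\textbf{Step 1: reduce to a statement about ranks.} Condition on $N:=|A\cap S|\sim\mathrm{Bin}(m,p)$. We have $\E N=pm\approx p\sqrt{2M}\to\infty$, and Chernoff bounds give $N\ge(1-\epsilon)pm$ with probability $1-o(1)$. Given $N$, the rewards on $A\cap S$ are i.i.d.\ $\mathcal N(\mu,\sigma^2)$. In the deterministic case, any policy that puts mass on the top $N/k$ ranks by proxy earns the mean of the top $1/k$ fraction. By standard Gaussian order-statistic asymptotics, this is $\sigma\sqrt{2\log k}(1-o(1))$ when $k\to\infty$ and $k\le N$.

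\textbf{Step 2: evaluate the two regimes.}
\begin{itemize}
\item If $K\le(1-\epsilon)\log N$, take $k=e^{K}$. The policy uniform on the top $N/k$ of $A\cap S$ has KL exactly $\log k=K$.
\item If $K>\log N$, taking $k=N$, i.e.\ picking the top element, uses KL $\log N\le K$.
\end{itemize}
Because $\hat\pi_K$ maximizes expected proxy reward under the budget, one cannot directly claim it equals the uniform-top-$k$ policy. I would argue instead that the tilted optimizer is a monotone function of rank, and compare it to the uniform-top-$k$ policy. Alternatively, I would choose the proxy itself to be $\hat r(i)=\Phi^{-1}$ of the normalized rank, times $\sigma$. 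Then the proxy expected value under any rank-based policy equals, up to $o(1)$, the gold expected value conditioned on ranks. Maximizing proxy then maximizes a quantity within $o(1)\cdot\sigma\sqrt{\log N}$ of the true gain. By the Donsker--Varadhan/Gaussian-tilt computation, which is the tight case of the upper-bound argument, the proxy optimum is $\approx\sigma\sqrt{2\min\{K,\log N\}}$. Since $\log N=\log(p\sqrt{2M})+o(\cdot)$ relative to its growth, this yields \eqref{eqn:comparison_asymptotic_lower}. Averaging over $N$, the low-probability event that $N$ is small contributes only $o(1)$ relatively, because the gain is nonnegative for rank-monotone policies.

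\textbf{Step 3: noisy preferences.} Under the Bradley--Terry model, win fractions concentrate around $g(R_i):=\E[\mathrm{sigmoid}((R_i-R_j)/\beta)\mid R_i]$, which is strictly increasing, with fluctuations of order $m^{-1/2}$. The claim stated after Theorem~\ref{thm:noisy_alignment} is that the empirical top element of any subset has gold gap $\to0$. The analogous claim needed here is that the mean gold reward of the empirical top-$1/k$ set matches the true top-$1/k$ mean up to $o(\sqrt{\log k})$.

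\textbf{Main obstacle.} The main obstacle is this rank-accuracy step in the extreme tail. For $k$ near $N$, the top gold rewards differ by only about $\sigma/\sqrt{\log N}$, while $g$ flattens in the tail. I would first try a union bound. Score noise is $O(\sqrt{\log m/m})$, and $g'$ at level $\sigma\sqrt{2\log N}$ decays only like $e^{-c\sqrt{\log N}}$ for Bradley--Terry, since the logistic tail is exponential in the gap. So the induced reward error is $m^{-1/2+o(1)}$, which is negligible compared with $\sqrt{\log N}$. Because only a relative $(1-o(1))$ guarantee is needed, even an additive $O(\sigma)$ error suffices.
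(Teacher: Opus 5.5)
\textbf{Gap: transferring to every proxy optimizer.} The gap is in the step you flag yourself: passing from a good feasible policy to \emph{every} optimizer $\hat\pi_K$ of your proxy.

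With your primary proxy (rank, or win fraction), nothing ties expected proxy reward to expected gold reward. Monotonicity of the tilt in rank does not rescue the comparison. Consider the rank-monotone policy that mixes $\pi_0$ with weight $1-\epsilon$ and the top element with weight $\epsilon$. It spends divergence $K$ with $\epsilon\approx K/\log N$, yet gains only about $\sigma K\sqrt{2/\log N}\ll\sigma\sqrt{2K}$ when $K\ll\log N$. So you would need an explicit analysis of the specific tilt, including how a common multiplier allocates the \emph{expected} budget across realizations of $(T,A)$. The proposal does not give this.

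Your alternative $\hat r=\sigma\Phi^{-1}(\text{normalized rank})$ needs a uniform calibration bound such as $\E\max_{i\in S}|\E[R_i\mid T]-\hat r(i;T)|=o(\sigma\sqrt{h})$, which you assert but do not prove. For Bradley--Terry data you never specify a calibrated proxy at all. So Step~3 controls the loss of a constructed policy but says nothing about the optimizer.

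Separately, setting the proxy to $-\infty$ outside $S$ breaks the problem. The event $\{A\neq\varnothing,\ A\cap S=\varnothing\}$ has positive probability whenever $0<p<1$ and $n>m$. On it, every finite-divergence policy must sit on your fallback point outside $S$. Hence every feasible policy has objective $-\infty$, and $\pi_0$ itself, with gain $0$, is an optimizer.

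\textbf{The missing idea.} Take the posterior-mean proxy $\hat r(i;T)=\E[R_i\mid T]$, with $\hat r(0;T)=\mu$. Because $A\perp(R,T)$ and the policy's randomization is independent of $R$ given $(T,A)$, the tower property gives $\E_\pi R_J=\E_\pi\hat r(J;T)$ exactly for every informed policy, under both preference models.

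Hence every proxy optimizer earns at least the gold gain of any feasible policy. The corollary then reduces to exhibiting one feasible policy with gain $(1-o(1))\sigma\sqrt{2h}$. Your uniform-top-$N/k$ rule would serve. The paper instead selects the top-ranked member of a uniformly drawn subset of size $s=\min\{N,\lfloor e^K\rfloor\}$, which has divergence at most $\log s$ by data processing and gain $\sigma\E a_s$. Your Step~3 inversion argument, plus a Cauchy--Schwarz bound off the good event, is essentially the paper's $\eta_m$ control for the noisy case. With this proxy, the ``Gaussian-tilt computation of the proxy optimum'' becomes unnecessary.
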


\section{Experiments}
\label{sec:experiments}

\subsection{Summary}
In this section, we evaluate our proposed scaling law empirically and test its accuracy against other  specifications. We use a real-world annotation set-up and generate preference data from a large gold reward model. We train proxy reward models with 0.6B, 1.7B, and 4B parameters and compare alternative scaling functions. The square-root-logarithmic form fits well across model sizes, preference noise, and optimization methods. It significantly outperforms linear and square-root growth in all tested settings

\subsection{Setup}\label{sec:experimental_setup}
\paragraph{Models and data.}
Following \citet{gao2023scaling}, we study reward optimization using preferences supplied by a fixed gold reward model, Llama-3.3-Nemotron-70B-Reward. Our proxies are Qwen3 base models with 0.6B, 1.7B, and 4B parameters, each with a scalar reward head. Our training corpus contains 1,000 first-turn prompts from the Arena 140K dataset \cite{arena140k}, a highly diverse dataset spanning code, mathematics, and other topics (the full mixture is described in Appendix \ref{app:data}). We sample 600 responses per prompt  from the fixed reference policy Qwen3-4B-Instruct at temperature~1.

\paragraph{Preference data.}
We construct 300 disjoint response pairs per prompt, summing to 300,000 offline comparisons in our largest dataset. We exclude identical-response pairs and duplicate text pairs. In the noiseless setting, the response with highest gold reward is preferred, with exact ties broken uniformly, as in \cite{gao2023scaling}. For the noisy-label experiments, we sample a Bradley-Terry preference with probability $\Pr(y_a\succ y_b\mid x)=\sigma(r(x,y_a)-r(x,y_b))$.

\paragraph{Training and comparison budgets.}
The main scaling experiment increases $M$ from 600 to 300,000 by expanding the set of prompts in a fixed, reward-independent order, while holding the number of comparisons per prompt at 300. We fine-tune all parameters for one pass using the pairwise logistic loss and AdamW at constant learning rate $10^{-5}$. Each policy update contains 600 comparisons from two complete prompt groups. 

\paragraph{Evaluation.}
We evaluate on 128 prompts disjoint from training, with 8,192 responses each. Best-of-$N$ selects the highest-proxy-reward response among $N$ candidates. We sweep logarithmically spaced $N$ up to 8,192, corresponding to the conventional optimization-pressure coordinate $K_N=\log N-(N-1)/N$ \citep{gao2023scaling}. We estimate gold and proxy rewards using the unbiased subset-average estimator of \citet[Appendix~I]{nakano_webgpt_2021}, which highly reduces the variance of the estimates. Reward gains are measured relative to the mean reward at $N=1$ over the full response pool for each prompt, and are normalized by a fixed reference gold standard deviation.

\subsection{The peak reward follows the predicted square root logarithmic scaling}

The comparison-limited regime of Section~\ref{sec:upper_bounds} predicts square-root-logarithmic growth of achievable gold-reward gains. We first test this scaling by varying the size of preference dataset $M$ sizes from $600$ to $300,000$. For each trained proxy reward ($M$), we determine the best-of-N response model, varying $N\in \mathcal{G}$ in our evaluated grid bounded by $N\leq 8{,}192$. We tune $N$ as the observed peak of $\widehat\Delta_{\max}(M)=\max_{N\in\mathcal{G}}\widehat\Delta_M(N)$, where $\widehat\Delta_M(N)$ is the mean standardized gold-reward gain.

\begin{table}[t]
    \centering
    \small
    \setlength{\tabcolsep}{3.2pt}
    \caption{\textbf{Scaling fits across model size, preference noise, and optimization method.} Each $R^2$ fits $a+b f(M)$ on all evaluation prompts, bold marks the largest $R^2$ in each row. The final three columns report Holm-adjusted, two-sided paired $t$-test $p$-values against $\sqrt{\log M}$.}
    \label{tab:scaling_summary}
    \begin{tabular}{lllrrrrrrr}
        \toprule
        & & & \multicolumn{4}{c}{$R^2$} & \multicolumn{3}{c}{$p$-value vs.\ $\sqrt{\log M}$} \\
        \cmidrule(lr){4-7}\cmidrule(lr){8-10}
        Proxy & Noise & Method & $\sqrt{\log M}$ & $\log M$ & $M$ & $M^{1/2}$ & $\log M$ & $M$ & $M^{1/2}$ \\
        \midrule
        0.6B & None & BoN & \textbf{0.992} & 0.990 & 0.571 & 0.803 & 1.000 & $6.1\times 10^{-20}$ & $3.4\times 10^{-9}$ \\
        0.6B & None & Tilt & \textbf{0.991} & 0.988 & 0.564 & 0.795 & 1.000 & $2.7\times 10^{-20}$ & $3.1\times 10^{-10}$ \\
        0.6B & BTL & BoN & 0.951 & \textbf{0.954} & 0.537 & 0.775 & 0.392 & $1.8\times 10^{-20}$ & $7.3\times 10^{-11}$ \\
        0.6B & BTL & Tilt & 0.947 & \textbf{0.949} & 0.527 & 0.767 & 0.708 & $1.3\times 10^{-20}$ & $4.7\times 10^{-11}$ \\
        \midrule
        1.7B & None & BoN & \textbf{0.989} & 0.981 & 0.520 & 0.757 & 0.023 & $8.7\times 10^{-25}$ & $2.2\times 10^{-14}$ \\
        1.7B & None & Tilt & \textbf{0.989} & 0.982 & 0.519 & 0.758 & 0.025 & $2.3\times 10^{-24}$ & $4.3\times 10^{-15}$ \\
        1.7B & BTL & BoN & \textbf{0.963} & 0.957 & 0.511 & 0.743 & 1.000 & $4.4\times 10^{-23}$ & $2.0\times 10^{-10}$ \\
        1.7B & BTL & Tilt & \textbf{0.953} & 0.948 & 0.502 & 0.734 & 1.000 & $1.4\times 10^{-22}$ & $3.4\times 10^{-10}$ \\
        \midrule
        4B & None & BoN & \textbf{0.972} & 0.964 & 0.476 & 0.721 & 0.003 & $2.4\times 10^{-28}$ & $2.7\times 10^{-17}$ \\
        4B & None & Tilt & \textbf{0.970} & 0.963 & 0.476 & 0.721 & 0.002 & $4.2\times 10^{-30}$ & $1.5\times 10^{-18}$ \\
        \bottomrule
    \end{tabular}
\end{table}

Figure~\ref{fig:gold_reward_scaling_0.6B} shows that the peak gold reward closely follows the predicted scaling law. Figure~\ref{fig:gold_reward_alternative_fits_0.6B} compares the predicted form with affine fits $a+b f(M)$ using the same models and two free parameters for every function. The square-root-logarithmic form gives the best fit, with $R^2=0.992$, compared with 0.990, 0.803, and 0.571 for $\log M$, $\sqrt{M}$, and $M$, respectively.

\begin{figure}[t]
    \centering
    \includegraphics[width=\linewidth]{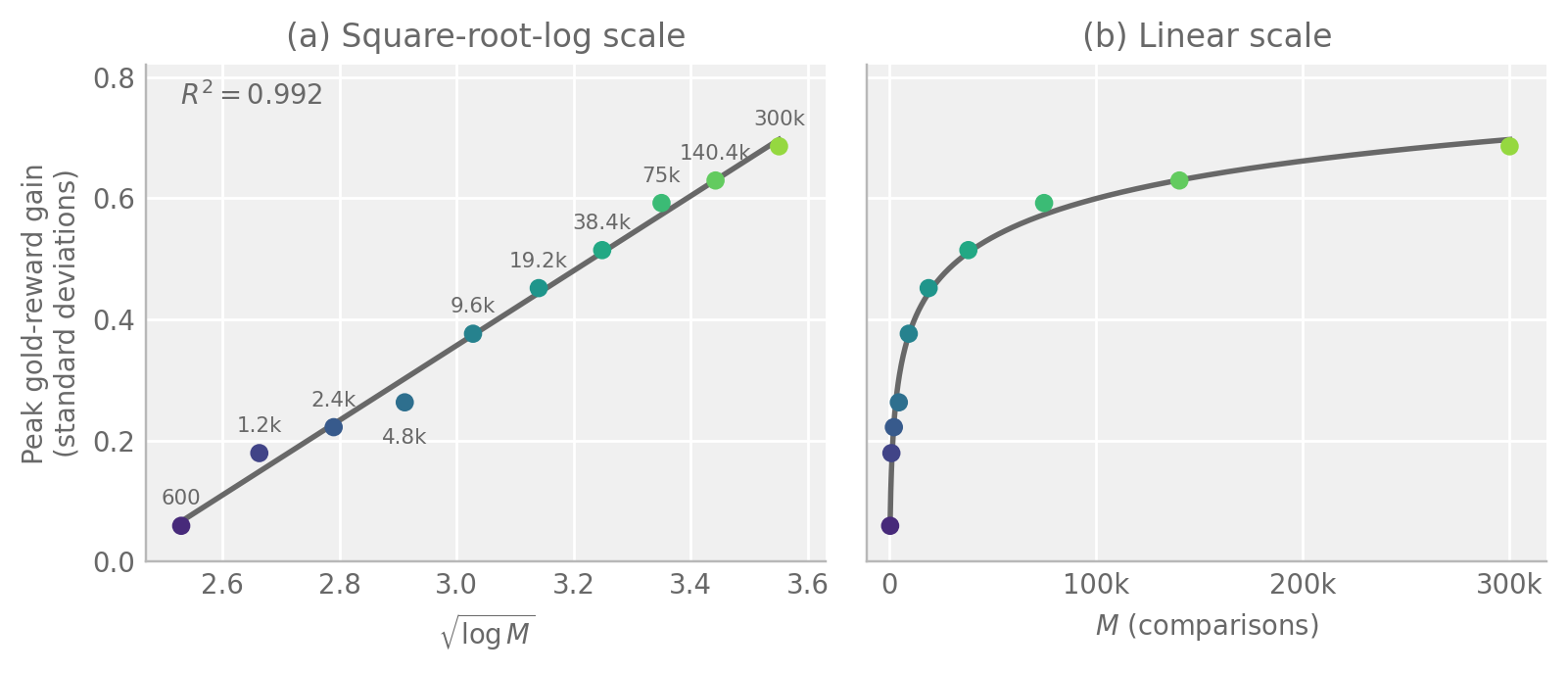}
    \caption{\textbf{Gold reward follows square-root-logarithmic comparison-budget scaling.} Ten 0.6B models are selected by uniform spacing in $\sqrt{\log M}$. Each point is the maximum mean gold-reward gain over the evaluated best-of-$N$ grid. Both panels show the same fitted function $a+b\sqrt{\log M}$: left on a square-root-logarithmic axis, right on a linear $M$ axis. Point labels denote comparison counts.}
    \label{fig:gold_reward_scaling_0.6B}
\end{figure}

\begin{figure}[t]
    \centering
    \includegraphics[width=0.85\linewidth]{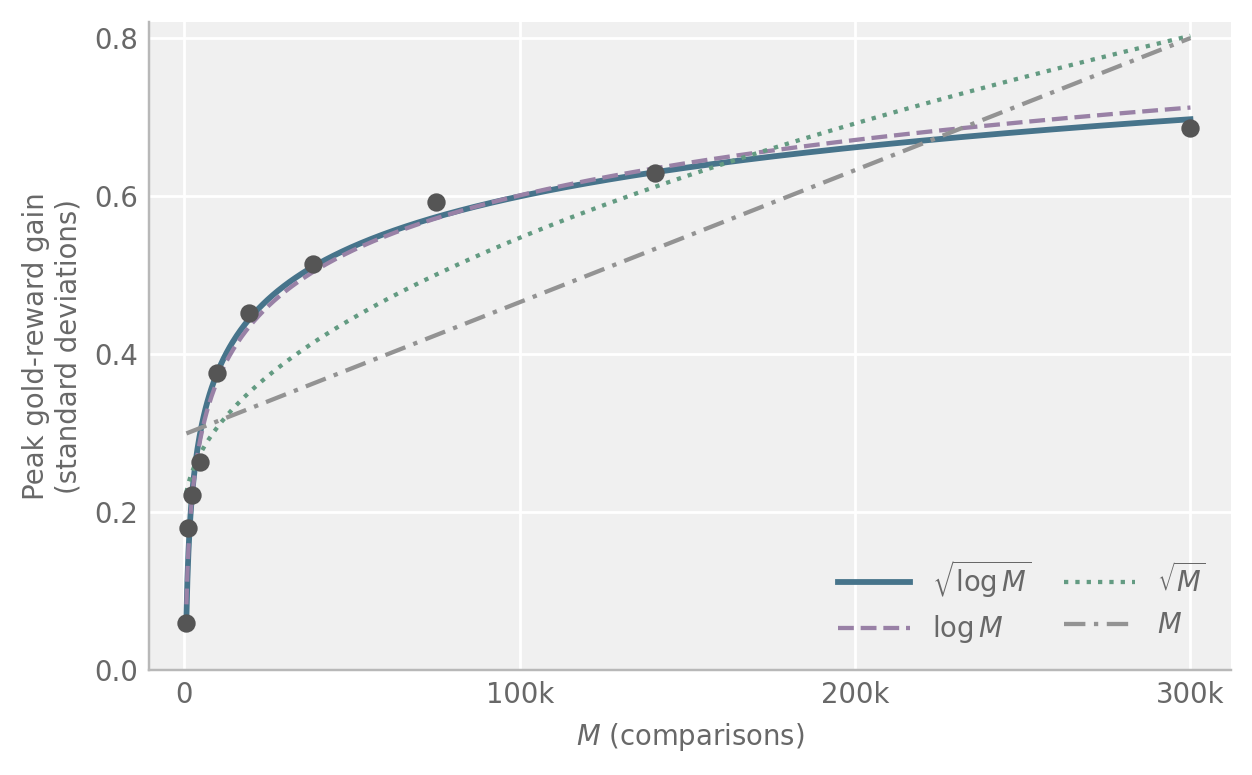}
    \caption{\textbf{Comparison of alternative scaling functions.} All curves are fitted to the same ten models as Figure~\ref{fig:gold_reward_scaling_0.6B}. The fits $a+b f(M)$ are shown on a common linear comparison-count axis.}
    \label{fig:gold_reward_alternative_fits_0.6B}
\end{figure}

Table~\ref{tab:scaling_summary} summarizes the fits across proxy sizes, preference noise, and optimization methods, which we examine below.
\subsection{Robustness to the optimization algorithm}
\label{sec:tilted_policy_scaling}

We test robustness to the optimization algorithm by replacing best-of-$N$ selection with exponential tilting, an idealized form of KL-regularized reinforcement learning that approximates its optimal policy using samples from the reference policy. For inverse temperature $\beta>0$, the objective $\max_\pi\{\E_\pi[\hat r_M]-\beta^{-1}\KL(\pi\|\pi_0)\}$ has the exponentially tilted optimizer $\pi_{\beta,M}(y\mid x)\propto\pi_0(y\mid x)\exp(\beta\hat r_M(x,y))$ \citep{rafailov2023direct}. We approximate this policy using responses from the reference policy. The same exponential reweighting is used in soft best-of-$N$ sampling to approximate KL-regularized alignment \citep{verdun_soft_2025}, including empirical studies with imperfect reward models \citep{aminian_best--n_2025}.

For each evaluation prompt $x$, we place uniform reference mass on its $S=8{,}192$ sampled responses and assign the tilted policy weights
\begin{equation}
\label{eq:empirical_tilt}
w_{\beta,M,i}(x)
=\frac{\exp\!\left(\beta\hat r_M(x,y_i)\right)}
{\sum_{j=1}^{S}\exp\!\left(\beta\hat r_M(x,y_j)\right)}.
\end{equation}
We compute the weighted gold reward and conditional KL exactly on each pool, averaging over the same test prompts and retaining the gold-reward normalization from Section~\ref{sec:experimental_setup}.

The right column of Figure~\ref{fig:all_scaling_fits} in Appendix~\ref{app:additional_exp} shows the same square-root-logarithmic scaling under exponential tilting.

\subsection{Robustness to preference noise}
\label{sec:noisy_preference_scaling}

We next test robustness to preference noise using the Bradley--Terry--Luce (BTL) model \citep{bradley1952rank}. Labels are drawn once and held fixed throughout training, 33.6\% of the 300,000 labels differ from their noiseless counterparts. For each model size, we keep the same model initialization, comparison stream, and evaluation pools, so the comparison isolates the effect of preference noise.
Panels (c), (d), (g), and (h) of Figure~\ref{fig:all_scaling_fits} show that the square-root-logarithmic trend persists under noisy preferences.

The fit is slightly less tight than in the noiseless setting, its agreement with the scaling predicted by our theory is on par with the theory's expected behavior for noisy labels. In particular, the noisy-label theorem predicts the same square-root-logarithmic scaling, but only in a higher-comparison regime. We hypothesize that in practice, additional samples are needed for the noise to average out and for the asymptotic scaling to emerge more cleanly.

\subsection{Robustness to proxy model size}
\label{sec:model_size_scaling}

Finally, we increase the proxy size from 0.6B to 1.7B and 4B parameters within the Qwen3 base-model family. Both proxies use the same noiseless preferences, comparison order, and evaluation pools.

Table~\ref{tab:scaling_summary} shows that the square-root-logarithmic form remains the best fit among the four specifications for both larger proxies and both optimization methods. For 1.7B, $R^2=0.989$ under both methods; for 4B, $R^2=0.972$ under best-of-$N$ and $0.970$ under tilting. The paired prediction tests also favor this form over $\log M$, $\sqrt{M}$, and $M$ for both larger proxies under both methods, with Holm-adjusted $p<0.05$ in each comparison.



\section{Conclusion}
\label{sec:conclusion}
Our work provides a theoretical and empirical characterization of how post-training performance scales jointly with preference data and optimization effort. A simple information-theoretic model yields the scaling $\sqrt{\min(\log (M),K})$ , with constructive procedures that attain this order. Our experiments show that the predicted dependence on preference data provides a strong fit across model sizes, noisy preferences, and optimization procedures. Extending the theory to more general reward distributions and adaptive preference collection, and testing the scaling with human preferences and policy training, are natural directions for future work.

\bibliographystyle{iclr2027_conference}
\bibliography{references}

@inproceedings{gao2023scaling,
  title={Scaling Laws for Reward Model Overoptimization},
  author={Gao, Leo and Schulman, John and Hilton, Jacob},
  booktitle={Proceedings of the 40th International Conference on Machine Learning},
  series={Proceedings of Machine Learning Research},
  volume={202},
  pages={10835--10866},
  year={2023},
  publisher={PMLR}
}

@misc{arena140k,
  title        = {Arena Human Preference 140K},
  author       = {{LMArena}},
  year         = {2025},
  howpublished = {\url{https://huggingface.co/datasets/lmarena-ai/arena-human-preference-140k}},
  note         = {Hugging Face dataset}
}

@article{rafailov2023direct,
  title={Direct preference optimization: Your language model is secretly a reward model},
  author={Rafailov, Rafael and Sharma, Archit and Mitchell, Eric and Manning, Christopher D and Ermon, Stefano and Finn, Chelsea},
  journal={Advances in neural information processing systems},
  volume={36},
  pages={53728--53741},
  year={2023}
}

@article{ouyang2022training,
  title={Training language models to follow instructions with human feedback},
  author={Ouyang, Long and Wu, Jeffrey and Jiang, Xu and Almeida, Diogo and Wainwright, Carroll and Mishkin, Pamela and Zhang, Chong and Agarwal, Sandhini and Slama, Katarina and Ray, Alex and others},
  journal={Advances in neural information processing systems},
  volume={35},
  pages={27730--27744},
  year={2022}
}

@article{mroueh2024information,
  title={Information theoretic guarantees for policy alignment in large language models},
  author={Mroueh, Youssef},
  journal={arXiv preprint arXiv:2406.05883},
  year={2024}
}

@inproceedings{laidlaw2025correlated,
  title={Correlated proxies: A new definition and improved mitigation for reward hacking},
  author={Laidlaw, Cassidy and Singhal, Shivam and Dragan, Anca},
  booktitle={International Conference on Learning Representations},
  volume={2025},
  year={2025}
}

@inproceedings{russo2016controlling,
  title={Controlling bias in adaptive data analysis using information theory},
  author={Russo, Daniel and Zou, James},
  booktitle={Artificial Intelligence and Statistics},
  pages={1232--1240},
  year={2016},
  organization={PMLR}
}

@article{manheim2018categorizing,
  title={Categorizing variants of Goodhart's Law},
  author={Manheim, David and Garrabrant, Scott},
  journal={arXiv preprint arXiv:1803.04585},
  year={2018}
}

@article{wiener1960some,
  title={Some moral and technical consequences of automation: As machines learn they may develop unforeseen strategies at rates that baffle their programmers.},
  author={Wiener, Norbert},
  journal={Science},
  volume={131},
  number={3410},
  pages={1355--1358},
  year={1960},
  publisher={American Association for the Advancement of Science}
}

@article{leike2017ai,
  title={AI safety gridworlds},
  author={Leike, Jan and Martic, Miljan and Krakovna, Victoria and Ortega, Pedro A and Everitt, Tom and Lefrancq, Andrew and Orseau, Laurent and Legg, Shane},
  journal={arXiv preprint arXiv:1711.09883},
  year={2017}
}

@article{bradley1952rank,
  title={Rank analysis of incomplete block designs: I. The method of paired comparisons},
  author={Bradley, Ralph Allan and Terry, Milton E.},
  journal={Biometrika},
  volume={39},
  number={3/4},
  pages={324--345},
  year={1952}
}

@inproceedings{christiano2017deep,
  title={Deep reinforcement learning from human preferences},
  author={Christiano, Paul F. and Leike, Jan and Brown, Tom and Martic, Miljan and Legg, Shane and Amodei, Dario},
  booktitle={Advances in Neural Information Processing Systems},
  volume={30},
  year={2017}
}

@article{vanroyconsequentialist,
  title={Consequentialist Objectives and Catastrophe},
  author={Marklund, Henrik and Infanger, Alex and Van Roy, Benjamin},
  journal={arXiv preprint arXiv:2603.15017},
  year={2026}
}

@inproceedings{rafailov2024scaling,
  title={Scaling Laws for Reward Model Overoptimization in Direct Alignment Algorithms},
  author={Rafailov, Rafael and Chittepu, Yaswanth and Park, Ryan and Sikchi, Harshit and Hejna, Joey and Knox, Bradley and Finn, Chelsea and Niekum, Scott},
  booktitle={Advances in Neural Information Processing Systems},
  volume={37},
  year={2024}
}

@inproceedings{coste2024reward,
  title={Reward Model Ensembles Help Mitigate Overoptimization},
  author={Coste, Thomas and Anwar, Usman and Kirk, Robert and Krueger, David},
  booktitle={International Conference on Learning Representations},
  year={2024}
}

@article{eisenstein2023helping,
  title={Helping or Herding? Reward Model Ensembles Mitigate but do not Eliminate Reward Hacking},
  author={Eisenstein, Jacob and Nagpal, Chirag and Agarwal, Alekh and Beirami, Ahmad and D'Amour, Alex and Dvijotham, DJ and Fisch, Adam and Heller, Katherine and Pfohl, Stephen and Ramachandran, Deepak and Shaw, Peter and Berant, Jonathan},
  journal={arXiv preprint arXiv:2312.09244},
  year={2023}
}

@inproceedings{skalse2022defining,
  title={Defining and Characterizing Reward Hacking},
  author={Skalse, Joar and Howe, Nikolaus H. R. and Krasheninnikov, Dmitrii and Krueger, David},
  booktitle={Advances in Neural Information Processing Systems},
  volume={35},
  year={2022}
}

@inproceedings{karwowski2024goodhart,
  title={Goodhart's Law in Reinforcement Learning},
  author={Karwowski, Jacek and Hayman, Oliver and Bai, Xingjian and Kiendlhofer, Klaus and Griffin, Charlie and Skalse, Joar},
  booktitle={International Conference on Learning Representations},
  year={2024}
}

@inproceedings{huang2025bon,
  title={Is Best-of-N the Best of Them? Coverage, Scaling, and Optimality in Inference-Time Alignment},
  author={Huang, Audrey and Block, Adam and Liu, Qinghua and Jiang, Nan and Krishnamurthy, Akshay and Foster, Dylan J.},
  booktitle={Proceedings of the 42nd International Conference on Machine Learning},
  series={Proceedings of Machine Learning Research},
  volume={267},
  year={2025},
  publisher={PMLR}
}

@inproceedings{zhu2023principled,
  title={Principled Reinforcement Learning with Human Feedback from Pairwise or $K$-wise Comparisons},
  author={Zhu, Banghua and Jordan, Michael and Jiao, Jiantao},
  booktitle={Proceedings of the 40th International Conference on Machine Learning},
  series={Proceedings of Machine Learning Research},
  volume={202},
  year={2023},
  publisher={PMLR}
}

@inproceedings{falahatgar2017maximum,
  title={Maximum Selection and Ranking under Noisy Comparisons},
  author={Falahatgar, Moein and Orlitsky, Alon and Pichapati, Venkatadheeraj and Suresh, Ananda Theertha},
  booktitle={Proceedings of the 34th International Conference on Machine Learning},
  series={Proceedings of Machine Learning Research},
  volume={70},
  year={2017},
  publisher={PMLR}
}

@inproceedings{chen2015spectral,
  title={Spectral {MLE}: Top-$K$ Rank Aggregation from Pairwise Comparisons},
  author={Chen, Yuxin and Suh, Changho},
  booktitle={Proceedings of the 32nd International Conference on Machine Learning},
  series={Proceedings of Machine Learning Research},
  volume={37},
  year={2015},
  publisher={PMLR}
}

@misc{nakano_webgpt_2021,
    title = {{WebGPT}: {Browser}-assisted question-answering with human feedback},
    shorttitle = {{WebGPT}},
    url = {https://arxiv.org/abs/2112.09332v3},
    language = {en},
    urldate = {2026-09-24},
    journal = {arXiv.org},
    author = {Nakano, Reiichiro and Hilton, Jacob and Balaji, Suchir and Wu, Jeff and Ouyang, Long and Kim, Christina and Hesse, Christopher and Jain, Shantanu and Kosaraju, Vineet and Saunders, William and Jiang, Xu and Cobbe, Karl and Eloundou, Tyna and Krueger, Gretchen and Button, Kevin and Knight, Matthew and Chess, Benjamin and Schulman, John},
    month = dec,
    year = {2021},
}

@misc{verdun_soft_2025,
    title = {Soft {Best}-of-n {Sampling} for {Model} {Alignment}},
    url = {http://arxiv.org/abs/2505.03156},
    doi = {10.48550/arXiv.2505.03156},
    urldate = {2026-09-01},
    publisher = {arXiv},
    author = {Verdun, Claudio Mayrink and Oesterling, Alex and Lakkaraju, Himabindu and Calmon, Flavio P.},
    month = may,
    year = {2025},
    note = {arXiv:2505.03156 [cs.IT]},
}

@misc{aminian_best--n_2025,
    title = {Best-of-{N} through the {Smoothing} {Lens}: {KL} {Divergence} and {Regret} {Analysis}},
    shorttitle = {Best-of-{N} through the {Smoothing} {Lens}},
    url = {http://arxiv.org/abs/2507.05913},
    doi = {10.48550/arXiv.2507.05913},
    urldate = {2026-09-24},
    publisher = {arXiv},
    author = {Aminian, Gholamali and Shenfeld, Idan and Asadi, Amir R. and Beirami, Ahmad and Mroueh, Youssef},
    month = jul,
    year = {2025},
    note = {arXiv:2507.05913 [stat.ML]},
}

\appendix
\section{Proofs}
\label{app:proofs}

We first establish the upper bound shared by the two preference models. We then prove the lower bound for exact preferences, extend the construction to noisy preferences, and derive the common asymptotic bound. Throughout the lower-bound proofs, $\sigma>0$. If $\sigma=0$, every policy has zero gain.

\subsection{The reward optimization upper bound}
\label{app:common_upper}

Consider a preference dataset generated on a possibly random pair set $E$, with $|E|\leq M$, and an informed policy $\pi$ satisfying $K(\pi;\pi_0)\leq K$. Both arguments below apply to exact preferences and to the Bradley--Terry model.

\paragraph{The divergence budget.}
Let $P_\pi$ and $P_0$ denote the joint laws of $(R,E,T,A,J)$ under $\pi$ and $\pi_0$. These laws differ only in the conditional distribution of the chosen $J$. The chain rule for relative entropy therefore gives
\[
\KL(P_\pi\|P_0)
=\E\!\left[\KL\!\left(\pi(\cdot\mid T,A)\,\middle\|\,\pi_0(\cdot\mid A,E)\right)\right]
\leq K.
\]
Under $P_0$, the selected index is independent of the rewards. Thus, for every $t>0$,
\[
\E_{P_0}\!\left[e^{t(R_J-\mu)}\right]
=\Pr(A=\varnothing)+\Pr(A\ne\varnothing)e^{t^2\sigma^2/2}
\leq e^{t^2\sigma^2/2}.
\]
The entropy inequality $\E_P f\leq\KL(P\|Q)+\log\E_Q e^f$, applied with $f=t(R_J-\mu)$, gives
\[
\Delta(\pi)\leq\frac Kt+\frac{t\sigma^2}{2}.
\]
Taking the infimum over $t>0$ gives $\Delta(\pi)\leq\sigma\sqrt{2K}$, including $K=0$ by letting $t\downarrow0$.

\paragraph{The size of the preference dataset.}
Let $V(E)$ be the set of responses appearing in $E$, so $|V(E)|\leq2M$. Conditional on $E$, the labels depend only on rewards in $V(E)$ and, in the noisy model, independent preference randomness. Every response outside $V(E)$ therefore has conditional mean $\mu$ given $(T,A)$. Writing $\pi_i=\pi(i\mid T,A)$ and using the independence of the policy's randomization from $R$ given $(T,A)$, we obtain
\[
\begin{split}
\Delta(\pi)
&=\E\!\left[\sum_{i\in A\cap V(E)}\pi_i(R_i-\mu)\right]\\
&\leq\E Z,\qquad
Z:=\max\bigl(\{0\}\cup\{R_i-\mu:i\in A\cap V(E)\}\bigr).
\end{split}
\]
For $t>0$, independence of $E$, $A$, and the rewards gives
\[
\begin{split}
\E e^{tZ}
&\leq1+\E\!\left[\sum_{i\in A\cap V(E)}e^{t(R_i-\mu)}\right]\\
&=1+p\E|V(E)|e^{t^2\sigma^2/2}
\leq(1+2pM)e^{t^2\sigma^2/2}.
\end{split}
\]
Jensen's inequality and minimization over $t$ now imply
\[
\Delta(\pi)\leq\inf_{t>0}\left\{\frac{\log(1+2pM)}t+\frac{t\sigma^2}{2}\right\}
=\sigma\sqrt{2\log(1+2pM)}.
\]
Combining the two bounds proves the upper bounds in Theorems~\ref{thm:offline_alignment} and~\ref{thm:noisy_alignment}. Notice that neither argument requires the size assumptions used for the lower bounds.

\subsection{Proof of Theorem~\ref{thm:offline_alignment}}
\label{app:exact_proof}

It remains to prove the lower bound. We first relate proxy optimization to the construction, then establish its feasibility and reward gain.

\paragraph{The proxy.}
\label{app:proxy_transfer}
For either preference model, take $\hat r(i;T)=\E[R_i\mid T]$, with $\hat r(0;T)=\mu$. Since $A\perp(R,T)$, every informed policy satisfies
\begin{equation}
\label{eqn:app_proxy_transfer}
\E_\pi R_J
=\E\!\left[\sum_i\pi(i\mid T,A)\E[R_i\mid T,A]\right]
=\E_\pi\hat r(J;T).
\end{equation}
It therefore suffices to construct a feasible informed policy with the claimed gain. An optimizer exists for the following reason. There are only finitely many possible pairs $(T,A)$: the dataset records pair identities and binary labels, and $A\subseteq[n]$. Ignore pairs of probability zero. A policy is then specified by finitely many choice probabilities. A finite divergence budget requires $\pi(i\mid T,A)=0$ whenever $\pi_0(i\mid A,E)=0$. The remaining probabilities are nonnegative and sum to one for each $(T,A)$, so their domain is compact. On this domain, the divergence is continuous: each term $x\log(x/b)$ with $b>0$ extends continuously to zero by assigning it the value zero. Hence the budget constraint defines a closed subset of this compact domain. The feasible set is nonempty because it contains $\pi_0$. Finally, expected proxy reward is a continuous linear function of the choice probabilities, so it must attain its maximum on the feasible set.

\paragraph{The preference dataset and base policy.}
For the remainder of the proof, write
\[
m=\lfloor\sqrt{2M}\rfloor,\qquad
q=p\sqrt{2M},\qquad
h=\min\{K,\log q\}.
\]
We assumed that $m\leq n$ and $q\geq4$. We then choose a fixed set $S$ of $m$ responses and compare every pair in $S$. This requires $\binom m2\leq M$ comparisons. Thus the exact preference dataset reveals the full ranking of $S$.

Let $B=A\cap S$ and $N=|B|\sim\mathrm{Bin}(m,p)$. When $N>0$, take the base policy to be uniform on $B$. When $N=0$, take it to be uniform on $A$, or to select action $0$ if $A=\varnothing$. We will use
\begin{equation}
\label{eqn:app_mean_size}
pm\geq q-1\geq3,\qquad pm\leq q,\qquad
\log(pm)\geq\tfrac12\log q.
\end{equation}
where the last inequality follows from $q-1\geq\sqrt q$ for $q\geq4$.

\paragraph{Selection within the divergence budget.}
For $K\geq\log2$, set $s=\min\{N,\lfloor e^K\rfloor\}$. When $N>0$, we draw an $s$-element subset $U$ uniformly from $B$, independently of the rewards and labels conditional on $A$, and select its highest-ranked element. To bound divergence, consider a reference rule that draws the same subset and then selects uniformly from it. Its marginal distribution on responses is uniform on $B$, exactly the base policy. Conditional on $(T,A,U)$, selecting one member instead of a uniform member has divergence $\log s$. Marginalizing over $U$ cannot increase relative entropy, so
\begin{equation}
\label{eqn:app_subset_kl}
\KL\!\left(\pi(\cdot\mid T,A)\,\middle\|\,\pi_0(\cdot\mid A,E)\right)
\leq\log s\leq K
\qquad(N>0).
\end{equation}
This feasibility holds for any ranking obtained from the preference dataset.

For $0\leq K<\log2$, a subset of size $\lfloor e^K\rfloor$ would give no gain. Instead, when $N\geq2$, sample a pair uniformly from $B$ and choose its higher-ranked element with probability $(1+\theta)/2$, where $\theta=\sqrt K$, choosing the other member otherwise. Follow the base policy when $N<2$. The reference rule again chooses uniformly from the same pair. Its marginal is thus the base policy, and the conditional divergence is
\begin{equation}
\label{eqn:app_pair_kl}
d(\theta):=\frac{1+\theta}{2}\log(1+\theta)
+\frac{1-\theta}{2}\log(1-\theta)
\leq\theta^2=K.
\end{equation}
The inequality follows from $\log x\leq x-1$. Marginalizing over the sampled pair proves feasibility, again for any ranking.

\paragraph{Two estimates for the reward gain.}
For independent standard Gaussians $Z_1,Z_2,\ldots$, define $a_j=\E\max_{1\leq i\leq j}Z_i$ for $j\geq1$, and set $a_0=0$. We use the Gaussian maximal inequalities:
\begin{equation}
\label{eqn:app_gaussian_max}
a_j\geq c_0\sqrt{\log j}\quad(j\geq2),\qquad
a_j\sim\sqrt{2\log j}\quad(j\to\infty),
\end{equation}
for a universal $c_0>0$.

We also need a uniform lower-tail estimate for $N$. Let $\nu=pm\geq3$ and $F=\{N\geq\nu/2\}$. Since $\E N^2\leq\nu^2+\nu$, the inequality
\[
\nu=\E N\leq\nu/2+\sqrt{\E N^2\Pr(F)}
\]
gives
\begin{equation}
\label{eqn:app_binomial_event}
\Pr(F)\geq\frac{\nu}{4(\nu+1)}\geq\frac16,
\qquad N\geq2,\quad \log N\geq\tfrac12\log\nu\quad\text{on }F.
\end{equation}
To see the last two claims, if $3\leq\nu\leq4$, integrality gives $N\geq2\geq\sqrt\nu$ on $F$: if $\nu\geq4$, then $N\geq\nu/2\geq\sqrt\nu$.

\paragraph{The achieved gain.}
Under exact preferences, the subset rule selects its largest gold reward. The subset was drawn independently of rewards, so
\begin{equation}
\label{eqn:app_exact_subset_gain}
\Delta(\pi)=\sigma\E a_s.
\end{equation}
For $K\geq\log2$, the inequality $\lfloor e^K\rfloor\geq e^{K/2}$, together with~\eqref{eqn:app_mean_size} and~\eqref{eqn:app_binomial_event}, implies, on $F$,
\[
\log s\geq\tfrac12\min\{K,\log(pm)\}\geq\tfrac14h.
\]
Since $a_s\geq0$ also outside $F$, we obtain
\[
\Delta(\pi)\geq\sigma\Pr(F)c_0\sqrt{h/4}
\geq\frac{c_0}{12}\sigma\sqrt h.
\]
For $K<\log2$, the sampled pair has independent Gaussian rewards and $\E|R_i-R_j|=2\sigma/\sqrt\pi$. Biasing the uniform choice toward the better response by $\theta=\sqrt K$ therefore gives
\begin{equation}
\label{eqn:app_exact_pair_gain}
\Delta(\pi)=\frac{\sigma\sqrt K}{\sqrt\pi}\Pr(N\geq2)
\geq\frac{\sigma\sqrt K}{6\sqrt\pi}.
\end{equation}
Here $h=K$, since $q\geq4$. Both ranges are thus covered by the universal constant
\begin{equation}
\label{eqn:app_exact_constant}
c_{\mathrm{ex}}:=\min\left\{\frac{c_0}{12},\frac1{6\sqrt\pi}\right\}>0.
\end{equation}
At $K=0$, the pair rule is the base policy and the claimed lower bound is zero. Applying~\eqref{eqn:app_proxy_transfer} completes the proof of Theorem~\ref{thm:offline_alignment}.

\subsection{Proof of Theorem~\ref{thm:noisy_alignment}}
\label{app:noisy_proof}

The upper bound follows from Appendix~\ref{app:common_upper}. For the lower bound, remember the set $S$, base policy, and notation from Appendix~\ref{app:exact_proof}, with $q\geq4$. The subset and pair rules remain feasible for any ordering of $S$. We proceed by bounding the reward lost when this ordering is obtained from a random preference dataset.

\paragraph{Estimating the ranking.}
For $i\in S$, let $w_i$ be its fraction of wins and let
\[
u_i=\E[w_i\mid R]
=\frac1{m-1}\sum_{j\in S\setminus\{i\}}
\ell\!\left(\frac{R_i-R_j}{\beta}\right),
\qquad \ell(x)=\frac1{1+e^{-x}}.
\]
Order responses by $w_i$, breaking ties by index. Set
\[
b_m=2\sigma\sqrt{\log m},\qquad
t_m=\sqrt{\frac{2\log m}{m-1}},
\]
and define
\[
G_R=\left\{\max_{i\in S}|R_i-\mu|\le b_m\right\},
\qquad
G_C=\left\{\max_{i\in S}|w_i-u_i|\le t_m\right\}.
\]
The Gaussian tail bound and a union bound give $\Pr(G_R^c)\le2/m$. Conditional on $R$, each $w_i$ averages $m-1$ independent Bernoulli variables. Hoeffding's inequality and another union bound therefore give
\[
\Pr(G_C^c\mid R)
\le2m\exp\{-2(m-1)t_m^2\}
=\frac2{m^3}.
\]
The scores of different responses need not be independent for this bound.

On $G_R$, every argument of $\ell$ in these scores lies in $[-2b_m/\beta,2b_m/\beta]$. Since $\ell'(x)\ge e^{-|x|}/4$, if $R_i\ge R_j$ then
\begin{align}
(m-1)(u_i-u_j)
&=\ell\!\left(\frac{R_i-R_j}{\beta}\right)
-\ell\!\left(\frac{R_j-R_i}{\beta}\right)\notag\\
&\quad+\sum_{k\in S\setminus\{i,j\}}
\left[\ell\!\left(\frac{R_i-R_k}{\beta}\right)
-\ell\!\left(\frac{R_j-R_k}{\beta}\right)\right]\notag\\
&\ge\frac{m e^{-2b_m/\beta}}{4\beta}(R_i-R_j).
\label{eqn:app_noisy_score_slope}
\end{align}
The mutual comparison contributes two terms to this lower bound, each of the other $m-2$ opponents contributes one. If $w_j\ge w_i$ despite $R_i\ge R_j$, then $u_i-u_j\le2t_m$ on $G_C$. Consequently, on $G_R\cap G_C$, every such inverted pair satisfies
\begin{equation}
R_i-R_j\le\delta_m,
\qquad \delta_m=8\beta e^{2b_m/\beta}t_m.
\label{eqn:app_noisy_inversion}
\end{equation}
This holds simultaneously for every pair. In particular, in any nonempty subset of $S$, the response with the largest score has reward within $\delta_m$ of the largest true reward.

To control the loss off this event, let $D_m=\max_{i\in S}R_i-\min_{i\in S}R_i$ and $U_m=\max_{i\in S}|R_i-\mu|/\sigma$. A Gaussian union bound gives
\[
\E U_m^2
\le\int_0^\infty\min\{1,2me^{-v/2}\}\,dv
=2\log(2m)+2,
\]
so $\E D_m^2\le8\sigma^2(\log(2m)+1)$. Since $\Pr((G_R\cap G_C)^c)\le4/m$, Cauchy--Schwarz shows that the expected loss from maximizing the score rather than the reward in any subset is at most
\begin{equation}
\eta_m
=8\beta e^{(4\sigma/\beta)\sqrt{\log m}}
\sqrt{\frac{2\log m}{m-1}}
+4\sqrt2\sigma\sqrt{\frac{\log(2m)+1}{m}}.
\label{eqn:app_noisy_eta}
\end{equation}
The bound is uniform over the subset, and hence applies to the random subsets used by our policy. It also applies to a random pair. Empty admissible sets within $S$ use the common fallback and incur no loss.

\paragraph{The resulting reward gain.}
For $K\ge\log2$, draw a uniform subset of $B$ of size $s=\min\{N,\lfloor e^K\rfloor\}$ and select its highest-scoring member. The subset is sampled independently of rewards and comparison data conditional on $A$. Equation~\ref{eqn:app_exact_subset_gain} and the uniform loss bound imply
\begin{equation}
\Delta(\pi)\ge\sigma\E a_s-\eta_m,
\label{eqn:app_noisy_subset_gain}
\end{equation}
where $s=0$ on $N=0$ and $a_0=0$. For $0\le K<\log2$, favor the highest-scoring member of a pair sampled uniformly from $B$ with bias $\sqrt K$. This rule is the mixture of selecting that member with probability $\sqrt K$ and selecting uniformly from the pair otherwise. Its loss relative to the exact-ranking rule is therefore at most $\sqrt K\,\eta_m$. By~\eqref{eqn:app_exact_pair_gain},
\begin{equation}
\Delta(\pi)
\ge\sqrt K\left(\frac{\sigma}{\sqrt\pi}\Pr(N\ge2)-\eta_m\right).
\label{eqn:app_noisy_pair_gain}
\end{equation}
The base policy is used when $N\le1$. The factor $\sqrt K$ on the error makes this estimate uniform as $K$ decreases to zero.

\paragraph{A bound for every dataset size.}
Set $r=\beta/\sigma$. Dividing~\eqref{eqn:app_noisy_eta} by $\sigma$ gives
\begin{equation}
\frac{\eta_m}{\sigma}
=8r e^{(4/r)\sqrt{\log m}}\sqrt{\frac{2\log m}{m-1}}
+4\sqrt2\sqrt{\frac{\log(2m)+1}{m}}
\longrightarrow0
\label{eqn:app_noisy_eta_limit}
\end{equation}
for every fixed $r>0$: the exponential factor is $m^{o(1)}$. Recall that $c_{\mathrm{ex}}\le1/(6\sqrt\pi)$ by~\eqref{eqn:app_exact_constant}. Choose an integer $m_0(r)\ge4$ such that, for all $m\ge m_0(r)$,
\[
\frac{\eta_m}{\sigma}
\le\min\left\{\frac{c_{\mathrm{ex}}\sqrt{\log2}}2,
\frac1{12\sqrt\pi}\right\}.
\]
This threshold depends only on $r$. When $K\ge\log2$, we have $h\ge\log2$, and the deterministic subset bound together with~\eqref{eqn:app_noisy_subset_gain} gives $\Delta(\pi)\ge(c_{\mathrm{ex}}/2)\sigma\sqrt h$. When $K<\log2$, we have $h=K$; using $\Pr(N\ge2)\ge1/6$ in~\eqref{eqn:app_noisy_pair_gain} gives the same conclusion.

To cover the remaining finite range $m<m_0(r)$, we sample a pair uniformly from $B$ and use its observed preference, following the base policy if $N<2$. If $K\ge\log2$, select the preferred member, otherwise we favor it with bias $\sqrt K$. The pair calculation in the deterministic proof gives divergence at most $K$ for both rules, regardless of the label. If $D\sim\mathcal N(0,2)$, define
\[
\gamma(r)=\frac12\E\left[D\tanh\!\left(\frac D{2r}\right)\right]>0.
\]
Conditioning on the two rewards shows that following their observed label has expected gain $\sigma\gamma(r)$. Favoring it with bias $\sqrt K$ multiplies this gain by $\sqrt K$. The expectation defining $\gamma(r)$ is finite, and its integrand is positive almost surely. Therefore
\[
\Delta(\pi)\ge\frac{\sigma\gamma(r)}6\min\{\sqrt K,1\}.
\]
In this range $q\le\sqrt{2M}<m+1\le m_0(r)+1$, so
\[
\min\{\sqrt K,1\}
\ge\frac{\sqrt h}{\sqrt{\max\{1,\log(m_0(r)+1)\}}}.
\]
Thus a positive constant valid at every finite budget is
\[
c_{\mathrm{noisy}}(r)
=\min\left\{\frac{c_{\mathrm{ex}}}{2},
\frac{\gamma(r)}{6\sqrt{\max\{1,\log(m_0(r)+1)\}}}\right\}.
\]
All the constructed policies use the same reward-blind base and meet the divergence constraint. Equation~\ref{eqn:app_proxy_transfer} transfers their gain to every constrained proxy optimizer, proving the theorem.

\subsection{Proof of Corollary~\ref{cor:comparison_asymptotics}}
\label{app:comparison_asymptotics}

We compare every pair in $S$ and use the same subset rule as above. Along the sequences in the corollary, $q=p\sqrt{2M}\to\infty$ and $K\to\infty$. Since $q-1\leq pm\leq q$, we have $pm/q\to1$ and $m\to\infty$. Write
\[
h_m:=\min\{K,\log(pm)\}=h+o(1),\qquad h\to\infty.
\]
A binomial Chernoff bound gives $\Pr(N\geq pm/2)\geq1-e^{-pm/8}$. On this event, for all sufficiently large budgets,
\[
s=\min\{N,\lfloor e^K\rfloor\}
\geq\tfrac12\min\{pm,e^K\}=\tfrac12e^{h_m}.
\]
Monotonicity and nonnegativity of $a_j$, followed by~\eqref{eqn:app_gaussian_max}, give
\begin{equation}
\label{eqn:app_asymptotic_gain}
\E a_s\geq(1-e^{-pm/8})a_{\lfloor e^{h_m}/2\rfloor}
\geq(1-o(1))\sqrt{2h}.
\end{equation}
The exact-preference construction has gain $\sigma\E a_s$. For noisy preferences, the construction loses at most $\eta_m$, as established in~\eqref{eqn:app_noisy_subset_gain}. For fixed $\sigma,\beta>0$, $\eta_m=o(1)$, so this loss is negligible relative to $\sigma\sqrt h$. Thus either preference model admits a feasible policy with gain
\[
(1-o(1))\sigma\sqrt{2\min\{K,\log(p\sqrt{2M})\}}.
\]
Thus the posterior-mean proxy transfers this bound to every optimizer by~\eqref{eqn:app_proxy_transfer}.

\section{Additional Experimental Details}
\label{app:experiments}

\subsection{Data construction and topic composition}
\label{app:data}

We use English first-turn prompts from LMArena's \texttt{arena-human-preference-140k}. The final training set contains 1,000 prompts, each contributing 300 disjoint response pairs and hence 600 retained responses. Responses are sampled from Qwen3-4B-Instruct-2507 with temperature~1, top-$p=1$, no top-$k$ restriction, and at most 1,024 new tokens. Pair construction excludes identical responses and repeated text pairs, additional responses are sampled when necessary to obtain 300 eligible pairs.

Table~\ref{tab:data_topics_train_test} gives the prompt-topic distribution.

\begin{table}[t]
    \centering
    \caption{\textbf{Training and evaluation prompt topics.} Counts and percentages within each set, using original Arena annotations. Tags may overlap. The dataset has no finer topic annotations for prompts marked \emph{Topic unspecified}.}
    \label{tab:data_topics_train_test}
    \begin{tabular}{lrr}
        \toprule
        Topic & Training ($n=1,000$) & Evaluation ($n=128$) \\
        \midrule
        Coding & 325 (32.5\%) & 43 (33.6\%) \\
        Mathematics & 81 (8.1\%) & 14 (10.9\%) \\
        Creative writing & 68 (6.8\%) & 8 (6.3\%) \\
        Topic unspecified & 558 (55.8\%) & 67 (52.3\%) \\
        \bottomrule
    \end{tabular}
\end{table}

\subsection{Conditional gold-reward distributions}
\label{app:gold_reward_distributions}

The theoretical model in Section~\ref{sec:it_setup} assumes Gaussian rewards. We examine this idealization using the empirical distribution of gold scores within individual prompts. Each example uses all 8,192 responses in its test-prompt pool, scored by the fixed gold reward model before best-of-$N$ selection.

We choose four prompts independently of their reward values. The examples concern AutoHotkey scrolling, a modal-logic agent, motivational story topics, and railway recruitment exam questions.

For a prompt with scores $r_1,\ldots,r_n$, let $\bar r=n^{-1}\sum_i r_i$ and $s^2=n^{-1}\sum_i(r_i-\bar r)^2$. The normal Q--Q plot compares the ordered standardized scores $z_{(i)}=(r_{(i)}-\bar r)/s$ with
\begin{equation}
    q_i=\Phi^{-1}\!\left(\frac{i-1/2}{n}\right),
    \qquad i=1,\ldots,n,
\end{equation}
where $\Phi$ is the standard normal cumulative distribution function. Gaussian scores would approximately follow the line $z=q$.

Figures~\ref{fig:gold_reward_distributions} and~\ref{fig:gold_reward_qq} show that the conditional gold-reward distributions need not be Gaussian. All four examples have negative empirical skewness, ranging from $-0.75$ to $-0.22$. The creative-writing example has the strongest asymmetry, with skewness $-0.75$ and excess kurtosis $1.19$. Departures from the reference line in the Q--Q plots reveal differences in distributional shape even after matching each prompt's mean and variance. Thus, the empirical scaling results arise in a setting whose reward distributions depart from the Gaussian idealization.

\begin{figure}[t]
    \centering
    \includegraphics[width=\linewidth]{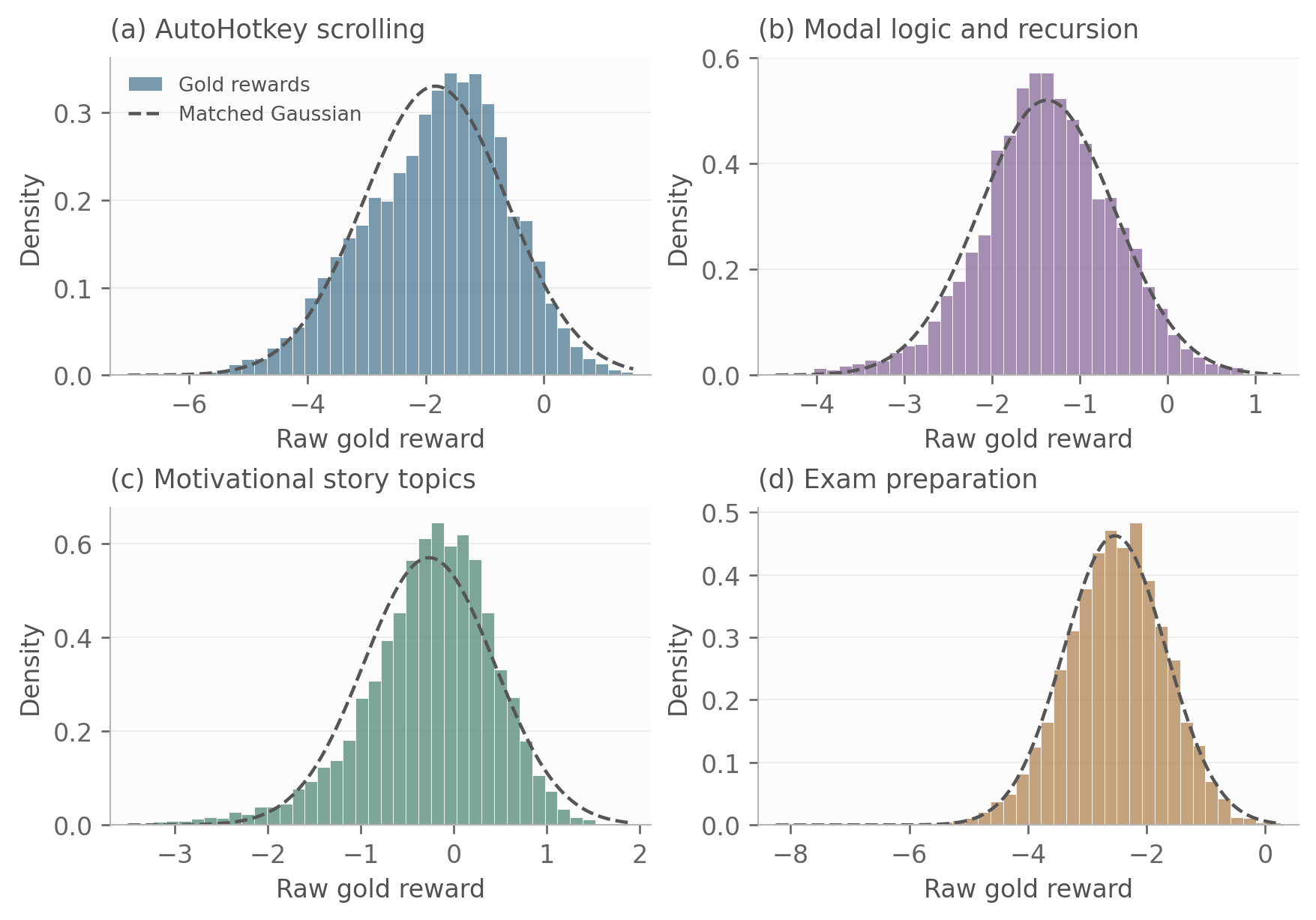}
    \caption{\textbf{Gold-reward distributions for four test prompts.} Each histogram contains all 8,192 raw gold scores for one prompt. The overlaid Gaussian has the same empirical mean and variance. Panels use the same prompt order as Figure~\ref{fig:gold_reward_qq}; prompts are selected by topic and question ID without inspecting rewards.}
    \label{fig:gold_reward_distributions}
\end{figure}

\begin{figure}[t]
    \centering
    \includegraphics[width=\linewidth]{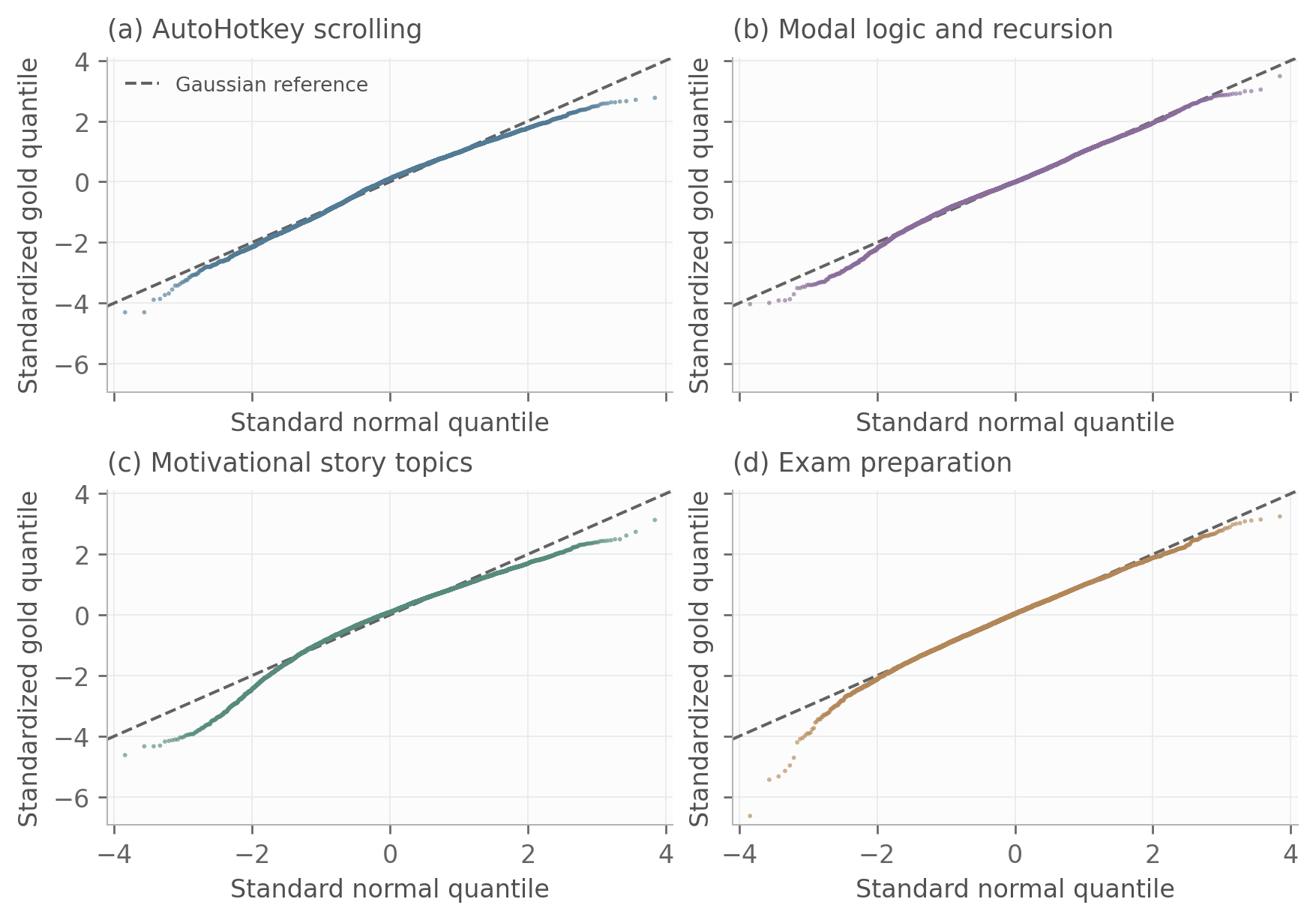}
    \caption{\textbf{Normal Q--Q plots for the same four prompts.} Ordered gold scores, standardized within each prompt, are plotted against standard normal quantiles at probabilities $(i-1/2)/8{,}192$. The diagonal is the Gaussian reference. Systematic departures, particularly in the tails, illustrate that empirical gold rewards need not follow a Gaussian distribution.}
    \label{fig:gold_reward_qq}
\end{figure}

\subsection{Hyperparameters}
\label{app:hyperparameters}

We found reward-model training to work over a broad range of learning rates in preliminary sweeps, from $2\times10^{-6}$ to $4\times10^{-5}$, consistent with the findings of \citet[Appendix~C.2]{ouyang2022training}. We then compared $2\times10^{-6}$ and $10^{-5}$ in 75,000-comparison pilots, selecting by final logistic loss on 512 validation comparisons from 64 held-out prompts distinct from training and test sets. All validations selected $10^{-5}$, which we use for the scaling experiments; \citet{coste2024reward} use the same reward-model learning rate. Table~\ref{tab:hyperparameters} gives the complete shared settings.

\begin{table}[t]
    \centering
    \small
    \setlength{\tabcolsep}{4pt}
    \caption{\textbf{Training hyperparameters.} }
    \label{tab:hyperparameters}
    \begin{tabular}{p{0.36\linewidth}p{0.57\linewidth}}
        \toprule
        Hyperparameter & Value \\
        \midrule
        Trainable parameters & Full backbone and scalar reward head \\
        Objective & Pairwise logistic loss \\
        Optimizer & AdamW \\
        Learning rate / schedule & $10^{-5}$ / constant, no warmup \\
        Adam $(\beta_1,\beta_2)$ / $\epsilon$ & $(0.9,0.999)$ / $10^{-8}$ \\
        Weight decay & $10^{-3}$ on non-normalization matrix weights \\
        Gradient clipping & Global $\ell_2$ norm of 1 \\
        Training epochs & 1 \\
        Effective batch size & 600 comparisons \\
        Model precision & BF16 weights; FP32 gradient accumulation, master weights, and Adam states \\
        Input format & Qwen chat template; thinking disabled \\
        Maximum sequence length & 4,096 tokens; no truncation, right padding \\
        Attention / attention dropout & FlashAttention~2 / 0 \\
        Reward head & Bias-free linear head on the final nonpadding token \\
        Head initialization & $\mathcal{N}(0,0.02^2)$ \\
        \bottomrule
    \end{tabular}
\end{table}

\section{Additional experiments}
\label{app:additional_exp}
Figure~\ref{fig:all_scaling_fits} collects the square-root-logarithmic fits for all ten settings in Table~\ref{tab:scaling_summary}.

Figure~\ref{fig:bon_examples_06b} shows the best-of-$N$ reward curves for four comparison budgets, using 0.6B proxies trained on noiseless and noisy preferences.

\begin{figure}[p]
    \centering
    \includegraphics[width=\linewidth]{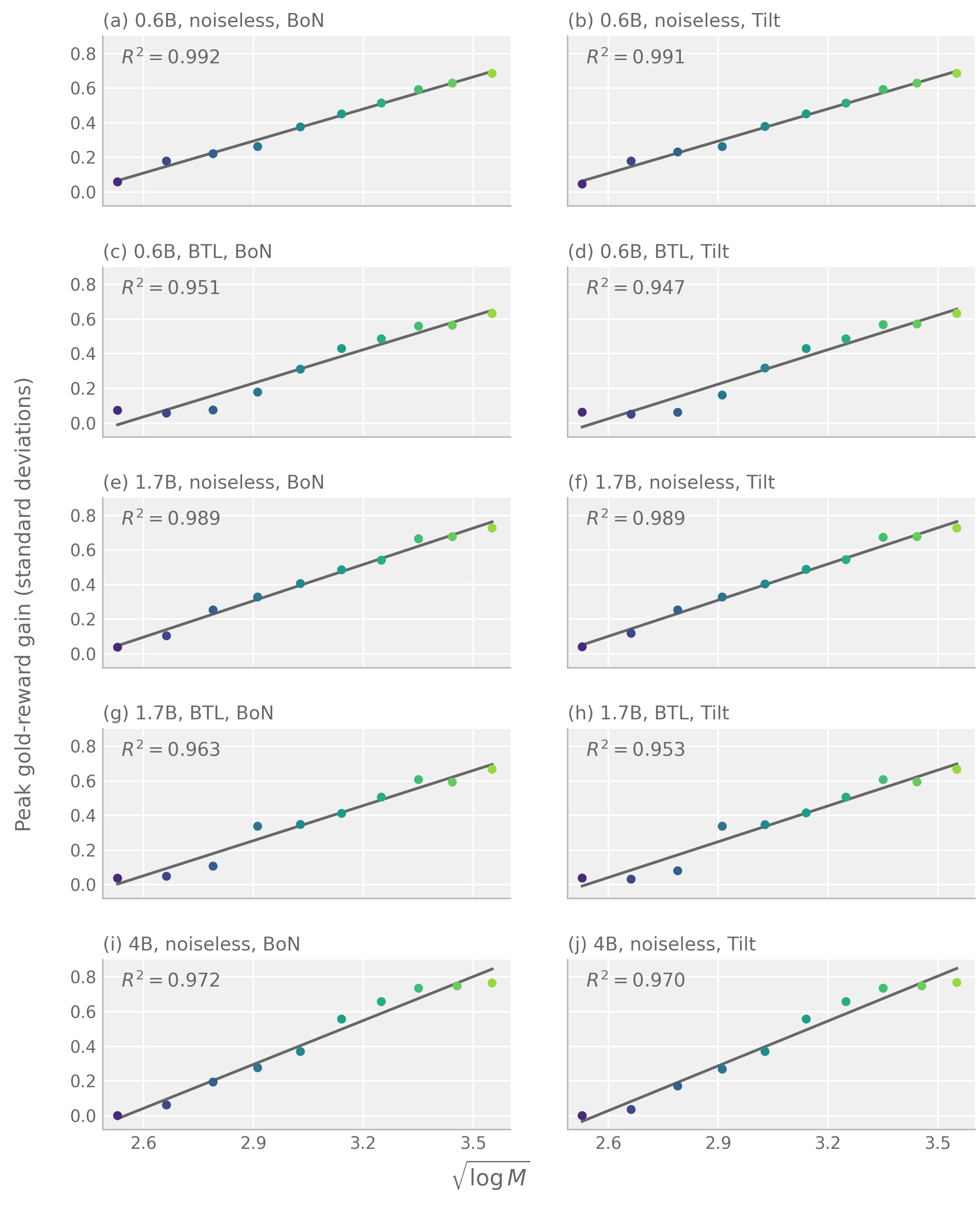}
    \caption{\textbf{Square-root-logarithmic scaling across all settings in Table~\ref{tab:scaling_summary}.} Columns show best-of-$N$ (BoN) and exponential tilting (Tilt); rows vary proxy size and preference noise. Each point is the peak mean gold-reward gain over the evaluated optimization grid. Lines fit $a+b\sqrt{\log M}$ to ten models between $M=600$ and $300{,}000$.}
    \label{fig:all_scaling_fits}
\end{figure}

\begin{figure}[t]
    \centering
    \includegraphics[width=\linewidth]{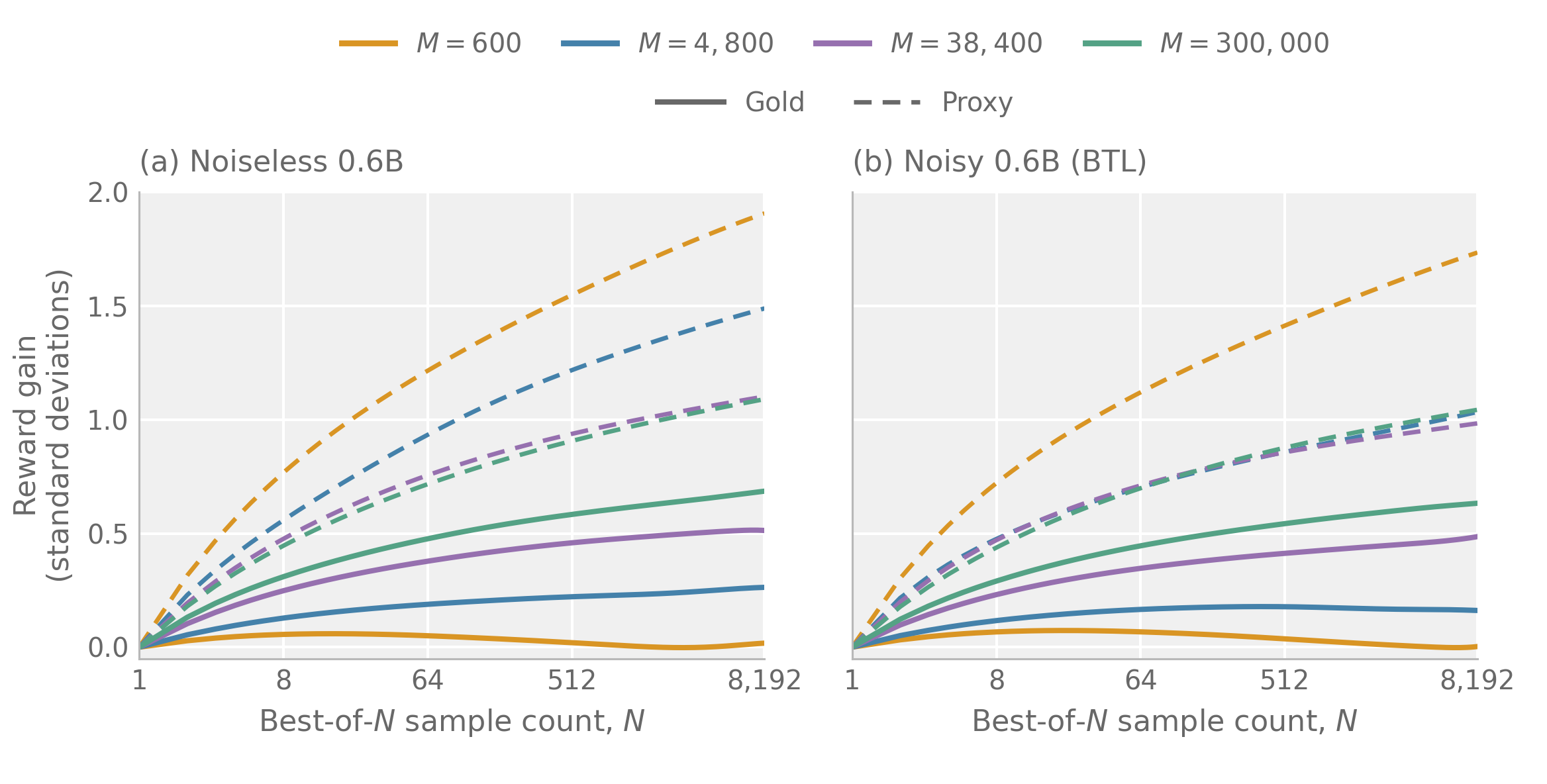}
    \caption{\textbf{Best-of-$N$ optimization with noiseless and noisy preferences.} Each panel shows four 0.6B checkpoints trained on $M=600$, $4{,}800$, $38{,}400$, and $300{,}000$ comparisons. The $N$-axis is logarithmic. Solid and dashed lines show mean gold- and proxy-reward gains over the same evaluation prompts, using the unbiased subset-average estimator on 8,192 responses per prompt. Gains are centered at $N=1$. Gold rewards use the fixed reference standard deviation, and each proxy uses its own standard deviation over the full evaluation pool.}
    \label{fig:bon_examples_06b}
\end{figure}

\end{document}